\newcommand{\Prob}[1]{\mathbb{P} \left\{#1\right\}}  
\newcommand{\Exp}[2][]{\mathbb{E}_{#1}\left[{#2}\right]}  
\newcommand{\ind}[1]{\mathbbm{1}\left(#1 \right)}  
\newcommand{\comment}[1]{} 
\newcommand{\DD}{\mathcal{D}}
\newcommand{\XX}{\mathcal{X}}
\newcommand{\YY}{\mathcal{Y}}
\newcommand{\R}{\mathbb{R}}  
\newcommand{\RP}{\mathbb{R}_{>0}}  
\newtheorem{theorem}{Theorem}[] 
\newtheorem{lemma}{Lemma}[]
\theoremstyle{definition}
\newcites{appendix}{Appendix References}
\title{Combining Human Predictions with Model Probabilities via Confusion Matrices and Calibration}
\author{%
  Gavin Kerrigan$^1$    \qquad Padhraic Smyth$^1$ \qquad Mark Steyvers$^2$ \\
  $^1$Department of Computer Science \qquad $^2$Department of Cognitive Sciences \\
  University of California, Irvine \\
  \texttt{gavin.k@uci.edu} \qquad \texttt{smyth@ics.uci.edu} \qquad \texttt{mark.steyvers@uci.edu}
}
\begin{document}

\maketitle

\begin{abstract} 
An increasingly common use case for machine learning models is augmenting the abilities of human decision makers. For classification tasks where neither the human  or model are perfectly accurate, a key step in obtaining high performance is combining their individual predictions in a manner that leverages their relative strengths. In this work, we develop a set of algorithms that combine the probabilistic output of a model with the class-level output of a human. We show theoretically that the accuracy of our combination model is driven not only by the individual human and model accuracies, but also by the model's confidence.  Empirical results on image classification with CIFAR-10 and a subset of ImageNet demonstrate that such human-model combinations consistently have higher accuracies than the model or human alone, and that the parameters of the combination method can be estimated effectively with as few as ten labeled datapoints.
\end{abstract}

\section{Introduction}
\label{sect:intro}

One of the main goals of machine learning is to develop algorithms that can operate robustly in an autonomous fashion without human supervision. However, there are many applications where hybrid human-machine approaches are likely to be a preferred mode of operation, for a variety of different reasons, such as improving trust between humans and machines, and allowing for a human  or a model  to take over in situations where the one or the other lacks expertise \cite{kamar2016directions,vaughan2017making, kleinberg2018human, riedl2019human,trouille2019citizen,johnson2019no,  zahedi2021human}.  

The performance benefits of combining multiple predictors, rather than relying on a single predictor, have been clearly demonstrated in past work in a variety of fields. For example, in machine learning there is a rich vein of research over the past few decades on combining models using a variety of different estimation and algorithmic approaches \cite{kittler1998combining2,dietterich2000ensemble,kuncheva2014combining,sagi2018ensemble}. This existing line of work emphasizes that combinations of models that have diversity in how they make predictions can systematically outperform a single model.  In parallel, in the behavioral science literature, there has been extensive prior work studying combinations of human opinions where, again, diverse combinations tend to outperform any single individual \cite{hong2004groups,lamberson2012optimal}.

This naturally leads to questions about hybrid combinations of human and machine predictions, rather than just combining one type or the other. For example, one motivation for hybrid combinations is empirical evidence that human and machine classifiers do not make the same types of errors for problems such as image classification \cite{geirhos2020beyond,rosenfeld2018totally,serre2019deep}, i.e., they are diverse in their predictions. These ideas have  begun to have impact in real-world applications, where hybrid human-machine teams have been found to be effective in areas such as crowdsourcing \cite{kamar2012combining},  citizen science \cite{beck2018integrating}, speech transcription \cite{gaur2015using}, face identification \cite{phillips2018face}, and clinical radiology \cite{bien2018deep,patel2019human,rajpurkar2020chexaid}. 

In this paper we focus on a specific, simple, and important instantiation of the general problem of hybrid combinations of human and machine predictions. In our problem we consider a $K$-way classification problem such as image classification, with a single human making hard classification decisions (no confidence estimates) and a single classification model providing class-conditional probability vectors. While humans can provide confidence estimates with their label predictions, calibrated self-assessment of confidence can be difficult  \cite{keren1991calibration,kahneman1996reality,klayman1999overconfidence} as well as time-consuming.    

A key question in this context is whether the non-probabilistic information from the human predictor can be effectively combined with the probabilistic information from a machine learning model. We answer this question in the affirmative and show both theoretically and empirically how relatively simple probabilistic combination techniques can robustly outperform each of a human and machine on their own. In particular we show that a human can augment their predictions with those of a classification model, improving classification accuracy and producing calibrated predictions, even when the model is less accurate than the human. Similarly, from the model's perspective, accuracy can often be significantly improved by augmenting its' class probabilities with a human's labels, while improving calibration performance, even when the human is less accurate than the model.

\begin{figure}
    \centering
    \includegraphics[height=4.5cm]{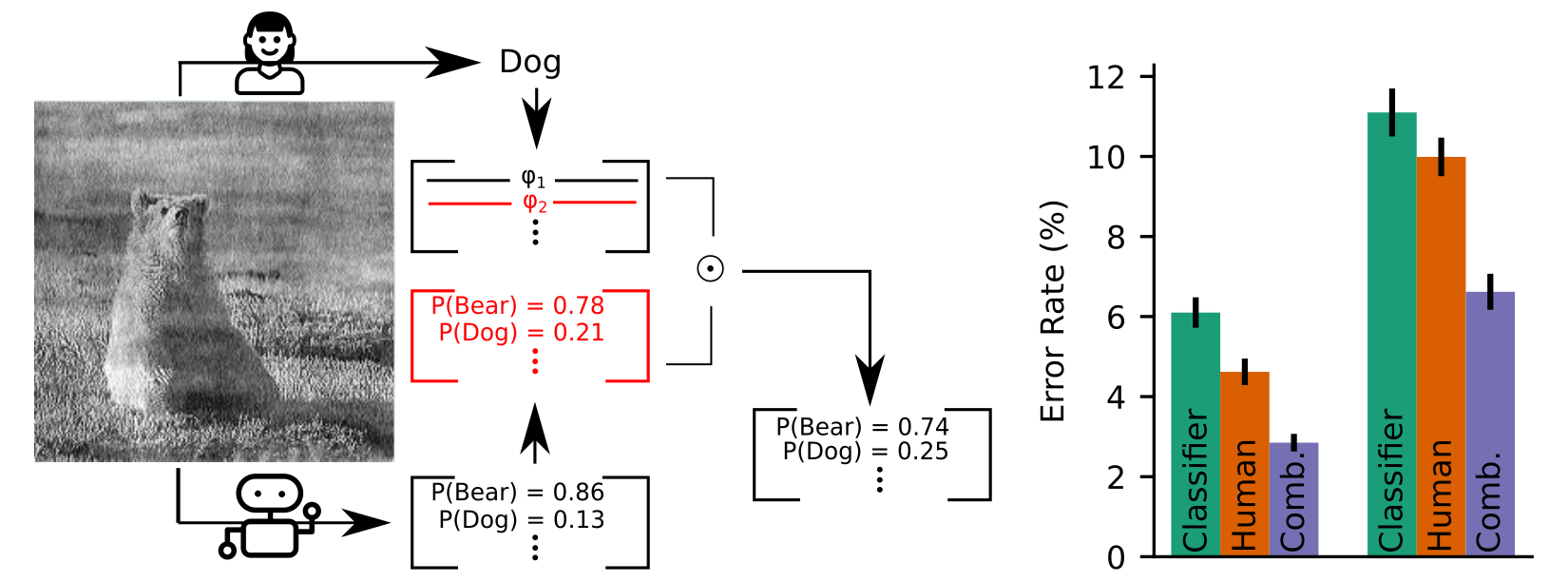}
    \caption{Left: Combining a human's label and a classifier's probabilities for an ImageNet-16H image (true label: bear). Right: Human-machine combinations (purple) achieve lower error rates on average than the human or classifier alone (ResNet-164 on CIFAR-10H, VGG-19 on ImageNet-16H).}
    \label{fig:example}
\end{figure}

Figure \ref{fig:example} shows an example using our proposed methodology for an image from the ImageNet dataset. The human incorrectly predicts the label {\tt dog}. The classification model (a VGG-19 deep network) predicts the correct label {\tt bear} with a probability of 0.86 (uncalibrated) and 0.78 (calibrated). The combined prediction is for {\tt bear} with a confidence of 0.74, with {\tt dog} having a higher confidence given the human's prediction. The histograms on the right show the overall average reduction in error rate on test images (ResNet-164 on CIFAR-10H \cite{peterson2019human}, VGG-19 on ImageNet-16H \cite{unpublishedwork}): even though the classifiers are on average less accurate (6.10\%, 11.1\% error) than the human (4.62\%, 9.99\% error), the resulting combinations have lower error rates than either (2.85\%, 6.62\% error).

The primary contributions of our work are as follows:
\begin{itemize}[nosep,leftmargin=*]
\item We propose and investigate a general framework for combining predictions using instance-level confidence from a model and class-level information from a human. The methods we propose are straightforward to implement in practice and label-efficient.\footnote{
Code for our estimation methods and experiments is available at \url{https://github.com/GavinKerrigan/conf_matrix_and_calibration}.}
\item We empirically validate our approach on the CIFAR-10H  and ImageNet-16H   image classification datasets  
and show that human-machine combinations in this context are systematically more accurate and better calibrated than either alone.
\item We develop a theoretical understanding in this framework of the key tradeoffs related to calibration and accuracy for both the individual human and model, introducing the notion of model and human confidence ratios. We illustrate how these factors affect the combination, showing for example how two models with the same accuracy but with different calibration properties can have different performance when combined with human predictions.
\end{itemize}

The paper begins in Section \ref{sect:background} by introducing notation and background concepts. 
Section \ref{sect:related} discusses  related work and in Section \ref{sect:methods} we propose a number of different estimation methods. Section \ref{sect:experiments} describes our experimental results with two image classification datasets, with individual human labelers, individual models, and combinations.  Complementing the experimental results in Section \ref{sect:theory} we develop  theoretical results that characterize combination performance. Section \ref{sect:conclusion} concludes the paper including a discussion of potential societal impact and limitations.

\section{Combining Human Labels and Model Probabilities}
\label{sect:background}

\paragraph{Notation.} We consider a $K$-ary classification problem, where the goal is to predict a label ${ y \in \YY = \{ 1, \dots, K\} }$ from features $x \in \XX$. The random variable $(x, y)$ follows an unknown distribution with support $\XX \times \YY$. We assume access to an individual human labeler represented by the function $h: \XX \to \YY$, where $h(x) \in \YY$ is the label predicted by the human. In addition, we have access to a trained machine classifier $m: \XX \to \R^K$, where $m(x)$ is the normalized probability vector output by the classifier. Both the human and classifier are assumed to be noisy labelers relative to the ground truth $y$. The true labels could be determined, for example, by expert labelers or additional information not contained in $x$.

\paragraph{Combining Predictions.}

Given an input $x \in \XX$, our goal is to predict a true label conditioned on the predictions $h(x)$ and $m(x)$. The key challenge in combining human and classifier predictions is simultaneously leveraging both the class-level outputs from the human and the predictive distributions output by the classifier. Although we focus on the particular case where $h$ is a human and $m$ is a classifier, our setup could be applied more generally to combinations of a non-probabilistic labeler (whose output is categorical) and a probabilistic labeler (whose output is a distribution over classes).

 There are a variety of functional forms that could be used to combine the predictions. We pursue a probabilistic approach, where the conditional distribution over labels that we seek can be factored via Bayes' rule as
\begin{equation}
    p(y | h(x), m(x)) \propto p( h(x) | y , m(x) ) p(y| m(x)).
\end{equation}
It is natural in this context to pursue a conditional independence (CI) approach, where the human labels $h(x)$ and the probabilistic predictions $m(x)$ are assumed to be conditionally independent given $y$. Under this assumption we can write
\begin{equation}  \label{eqn:combo_eqn}
    p(y | h(x), m(x))  \propto p( h(x) | y  ) p(y| m(x))
\end{equation}
where the right-hand side terms have a natural interpretation in terms of calibrated probabilities at the class level $\big(p( h(x) | y  )\big)$ and at the instance level $\big(p(y| m(x))\big)$.

We parameterize the term $p(h(x) | y)$ by the confusion matrix for the labeler $h$, which we denote by $\varphi$ with entries $\varphi_{ij} = p(h(x) = i | y = j)$. On the other hand, the probabilistic output of the classifier $m(x)$ may differ from $p(y | m(x))$. For example, modern neural networks tend to be overconfident in their predictions \cite{guo2017calibration}. To remedy this, post-hoc calibration maps $m(x)$ to well-calibrated probabilities via a learned calibration map with parameters $\theta$. In this work, we use $m^{\theta}(x)$ to denote the output of the classifier after applying such a calibration map. The second term in Equation \eqref{eqn:combo_eqn} is then parameterized by the calibrated classifier probabilities $m^{\theta}(x)$. Altogether, our method expresses the predicted probability for class $j$ as:
\begin{equation} \label{eqn:calibrate_confuse_combo}
\boxed{
    p(y = j | h(x) = i, m(x)) = \frac{\varphi_{ij} m_j^\theta(x)   }{\sum_{k=1}^K  \varphi_{i k} m_k^\theta(x)  }
    }
\end{equation}

The CI assumption above is common (both implicitly and explicitly) in prior work on combining predictions, such as additive classifier ensembles \cite{kuncheva2014combining, sagi2018ensemble} and (log-)linear opinion pools \cite{genest1986combining,jacobs1995methods}. As our primary motivation is to develop a relatively simple and robust methodology for combining human and model predictions, the additional functional or parametric assumptions (and parameters)  required to specify a joint model for $p(h(x), m(x) | y)$ are beyond our scope. In addition, although the CI assumption is unlikely to hold exactly, prior work \cite{kuncheva2006optimality} notes that a CI model can be an optimal discriminant even when the CI assumption is violated. As further motivation, for the two datasets we use in this paper, CIFAR-10H and ImageNet-16H, the conditional dependence of $h(x)$ and $m(x)$ appears to be relatively weak (see Appendix \ref{sect:assessing} for details).

\section{Related Work}
\label{sect:related}

We summarize below relevant aspects of related literature. While there is a significant amount of prior work in machine learning and related fields on combining predictors,  this work has in general not   addressed the specific problem of combining hard label predictions from a human with probabilistic label predictions from a model. 

\paragraph{Ensembles and Opinion Pools.} There is a rich literature in machine learning on studying predictions based on ensembles of classification models.   For non-probabilistic classifiers, the most common aggregation methods are variants of (weighted) majority voting \citep{dietterich2000ensemble, sagi2018ensemble}. However, in our case of only two predictors, a weighted majority vote ensemble can never improve accuracy over its components.  Beyond majority voting, naive Bayes aggregation \citep{xu1992methods,kuncheva2014combining} fits a class-level confusion matrix to each predictor. \citet{kim2012bayesian} develop a fully Bayesian extension of this, which relaxes the independence assumption by explicitly modeling correlations between predictors. However, because these confusion-matrix aggregation methods are at the class level they are unable to take full advantage of the instance-level uncertainties produced by the probabilistic labeler. 

In the context of aggregating predictions from multiple humans, there has been a considerable amount of prior work in the behavioral sciences and forecasting literature. Approaches include additive linear and log-linear opinion pools for subjective distributions \cite{genest1986combining,jacobs1995methods}, techniques for weighting linear combinations of real-valued human predictions \cite{lamberson2012optimal,davis2015composition}, and voting methods for combining label predictions from more than two human predictors \cite{lee2017relationship}.  A key difference between these methods and our work is that they do not address the problem of how to combine probabilistic and non-probabilistic predictions in a human-machine context.

\paragraph{Leveraging Human and Model Predictions.}  
Combining human predictions with model predictions to solve classification problems has been a topic of recent  interest in a number of different areas. For example, in  \cite{wright2017transient}  simple  averaging is used to combine the labels of multiple human annotators with the  output of a classifier for astronomical image classification, achieving better performance than with either the humans or the classifier. In crowd-sourcing, classification models have been used to automatically filter examples to improve human annotation efficiency \cite{kamar2012combining, russakovsky2015best,vaughan2017making,trouille2019citizen}.  A similar line of research  focuses on algorithmic deferral techniques where a model defers to human predictions based on the model's confidence \cite{madras2017predict, raghu2019algorithmic}, as well as work on adapting prediction models to the human decision maker \cite{mozannar2020consistent, wilder2020learning, bansal2021is,okati2021differentiable}. 
The results in \cite{mozannar2020consistent} in particular describe experiments with the   same CIFAR-10H dataset that we use in this paper. However, in addition to being different to our work in terms of its focus on deferral (rather than combining) we also note that in \cite{mozannar2020consistent} the improvements in performance are demonstrated using relatively large numbers of human labels. In contrast, as we demonstrate in Section \ref{sect:experiments} on the CIFAR-10H and ImageNet-16H datasets, our methods require only a small number of human labels to yield combined predictions that are more accurate than either human or model alone. In general, existing work on filtering and deferral strategies complements the combining methods that we develop  in this paper.  All of these approaches are  useful in a broad range of human-AI applications, but in different contexts.

 \section{Estimation Methods and Algorithms}
 \label{sect:methods}

Combining human and machine predictions via Equation \eqref{eqn:calibrate_confuse_combo} requires learning two sets of parameters: confusion matrix parameters for the human and calibration parameters for the classifier. The choice of procedure used to infer these parameters impacts the label efficiency and quality of the resulting combination. In this section, we detail several inference procedures and empirically evaluate them in the context of human-machine combinations.

To estimate our combination model, we assume access to a combination dataset ${\DD_C = \{ \left(h(x_\ell), m(x_\ell), y_\ell \right)\}_{\ell=1}^{n}}$ with human labels, machine probabilities, and ground truth labels. We assume the classifier is pre-trained with true labels on a separate training set $\DD_T$. 

\paragraph{Confusion Matrix Estimation.}
Recall that $\varphi$ denotes a a confusion matrix for the human of shape $K \times K$, where $\varphi_{ij}$ is $p(h(x) = i | y = j)$. The most straightforward estimate for this quantity is the maximum likelihood estimate, where $\varphi_{ij}$ is estimated by the number of datapoints in $\DD_C$ where the human labeler predicts $h(x) = i$ when the ground truth is $y = j$, normalized by the number of points in $\DD_C$ where $y=j$.

However, as the size of the confusion matrix is quadratic in $K$, this estimate will have high variance for small amounts of labeled data and collecting enough human labels to overcome this variance could be prohibitively expensive. We can instead take a Bayesian approach and incorporate informative prior information. Given the true label $y=j$, the human label $h | y=j \sim \text{Discrete}(\varphi_{* j} )$ is assumed to be drawn from a discrete distribution with parameters corresponding to the $j$th row in the confusion matrix. We place a conjugate Dirichlet prior $\varphi_{*j} \sim \text{Dirichlet}(\alpha_j)$ over each column with parameters $\alpha_j \in \R^k$. The prior parameters $\alpha_j$ are chosen such that
\begin{equation*}
    (\alpha_j)_i =\begin{cases} 
                  \beta & i \neq j \\
                  \gamma & i = j
                \end{cases}
\end{equation*}

That is, the prior matrix is $\gamma \in \RP$ along the diagonal and $\beta \in \RP$ on the off-diagonal. We choose $\beta$ and $\gamma$ such that the resulting Dirichlet distribution has mode equal to the train-set accuracy of the classifier, which can be obtained without additional human labels. This choice of prior reflects our belief that the confusion matrix will have a diagonally dominant structure. Posterior estimates of the confusion matrix can then be obtained straightforwardly by conjugacy.

\paragraph{Calibration Parameter Estimation.} Scaling-based calibration maps are typically fit by optimizing the  log-likelihood \cite{guo2017calibration, zhang2020mix, kull2019beyond}. In this section, we detail a Bayesian version of temperature scaling \cite{guo2017calibration}, allowing us to incorporate informative prior information. A temperature $T \in (0, 1)$ indicates underconfidence, and $T \in (1, \infty)$ indicates overconfidence. To account for this difference in scale, we place a Gaussian prior on the log-temperature $\log T = \tau \sim \mathcal{N}(\mu, \sigma^2)$. As this is a non-conjugate prior, the maximum a posteriori (MAP) temperature is estimated via gradient-based optimization. In our experiments, we choose $\sigma = 0.5$ for the CIFAR-10 models and $\sigma = 0.75$ for the ImageNet models, and we use $\mu = 0.5$ throughout. These parameters were chosen to reflect our belief that deep models tend to be overconfident and to concentrate the prior on reasonable temperature values. In Appendix \ref{sect:fully_bayesian_ts}, we derive a fully Bayesian approach where we marginalize over the posterior distribution over temperature (e.g. using Monte Carlo methods \cite{hoffman2014no}). However, we find empirically that the simpler MAP approach is more effective and, as a result, focus on MAP estimation in this  paper.

\paragraph{Learning without Ground Truth.} Requiring  both human and ground truth labels  in $\DD_C$ can be a potentially limiting assumption in domains where ground truth labels are unavailable or expensive to obtain. To avoid this, we propose an unsupervised approach that is able to learn both calibration parameters and confusion matrix parameters from a combination dataset of the form ${\DD_C = \{ \left(h(x_\ell), m(x_\ell) \right) \}}$, consisting only of human labels and machine probabilities. We treat the ground truth labels as latent and fit the required parameters using Expectation-Maximization (EM) \cite{dempster1977maximum} (details  in Appendix \ref{sect:em_details}). This approach can be seen as a novel extension of the Dawid-Skene model \cite{dawid1979maximum}, where calibration parameters are fit for the model rather than a confusion matrix. We perform both maximum likelihood and MAP estimation with this method, using the same priors as above.

\begin{table}[!t]
    \centering
    \caption{Summary of combination methods studied in this work. Except for logistic regression, parameter counts correspond to calibration using MAP temperature scaling (one parameter), and confusion matrices are fit with MAP inference. The human output is always a label.}
    \resizebox{\columnwidth}{!}{
    \begin{tabular}{lccccccc}
        \toprule
         Method Name & Acronym & Parameters & Model Output & Ground Truth? & Label Efficient? \\
         \midrule
         Logistic Regression & LR & $k^2 + 2k$ & Probabilities (P) & \checkmark & {\sf X}\\
         Calibrated Machine Probs \& Single-Parameter Confusion & SP & 2 & Probabilities (P)&  \checkmark & \checkmark\\
         \midrule
         Machine Labels \& Human Labels & L+L & $2k^2$ & Labels (L) & \checkmark & {\sf X}\\
          Calibrated Machine Probs. \& Human Labels & P+L & $k^2+1$ &Probabilities (P) & \checkmark & \checkmark \\
         Calibrated Machine Probs. \& Human Labels (EM) & P+L-EM & $k^2+1$ & Probabilities (P)& {\sf X} & \checkmark \\
        \bottomrule \\
    \end{tabular}
    }
    \label{tab:various methods}
\end{table}
 
For clarity of exposition, we focus on three types of combinations in our results:
\begin{itemize}[nosep,leftmargin=*]    
\item \textbf{Machine Labels \& Human Labels (L+L)}, a baseline where the instance-level probabilities from the model are discarded and instead a confusion matrix is fit for the model, as well as a confusion matrix for the human. The confusion matrices are estimated via supervised MAP inference. This can be viewed as a naive Bayes' combination  \cite[Chapter~4]{kuncheva2014combining} for non-probabilistic predictors.
    \item \textbf{Calibrated Machine Probabilities \& Human Labels (P+L)} combined via Equation \eqref{eqn:calibrate_confuse_combo} using \emph{supervised} MAP estimates for both the calibration parameters and confusion matrix parameters.
    \item \textbf{Calibrated Machine Probabilities \& Human Labels (P+L-EM)} combined via Equation \eqref{eqn:calibrate_confuse_combo} using \emph{unsupervised} MAP estimates fit with our EM algorithm, i.e. using human labels but no ground truth.
\end{itemize}

In terms of complexity, these methods sit between a simple model with only one parameter for the human confusion matrix \textbf{(SP)} (where the diagonal entry corresponds to the human's marginal accuracy), and a full multinomial logistic regression model (\textbf{LR}).\footnote{In fact, the CI combination with temperature scaling can be seen as a special case of multinomial logistic regression taking $m(x)$ and $h(x)$ as inputs (see Appendix \ref{sect:lr_and_ci}).} We find that \textbf{LR} can obtain slightly lower error rates in some cases, but requires significantly more labeled data than the other methods to do so. At the other extreme, \textbf{SP} is highly data efficient, but underfits compared to our preferred methods. We provide additional discussion of these methods to Appendix \ref{sect:addl_learning_curves}. The various estimation methods discussed in this section are summarized in Table \ref{tab:various methods}.

\section{Experiments}
\label{sect:experiments} 

\paragraph{Datasets and Models.}  
We evaluate various combination strategies on two pre-existing image classification datasets that include human annotations: CIFAR-10H \citep{peterson2019human} and  ImageNet-16H \citep{unpublishedwork}. CIFAR-10H contains 10-way human classifications for 10,000 images from the standard CIFAR10 test set \cite{krizhevsky2009learning}. ImageNet-16H contains 16-way human classifications for noisy images from the ImageNet test set \cite{deng2009imagenet}, distorted by phase noise at each spatial frequency based on four levels of phase noise (80, 95, 110, and 125). The human classifications for CIFAR-10H and ImageNet-16H come from the Amazon Mechanical Turk platform. 

For both CIFAR-10H and ImageNet-16H, we select a single human label for each image by randomly sampling from the available human annotations. We experiment with four CNN models on CIFAR-10H (ResNet-110, Resnet-164 \cite{he2016deep}, PreResNet-164 \cite{he2016identity}, DenseNet \cite{huang2017densely}), and eight models (four VGG-19 \cite{simonyan2014very}, four GoogLeNet \cite{szegedy2015going}) of varying accuracy on ImageNet-16H. Our models are chosen to span a range of performance, from below human accuracy to exceeding human accuracy. See Appendix \ref{sec:model_details} for details regarding our model architectures and training procedures.

Both datasets are partitioned into three disjoint subsets: (i) a model training set $\DD_T$, (ii) a combination training set $\DD_C$, and (iii) an evaluation set $\DD_E$. The model training set is the same as the suggested training split for the original CIFAR-10 and ImageNet datasets, and is used to fit the classification models. The combination training set is used to estimate any calibration parameters and confusion matrices, and the held-out evaluation set is used solely for testing. The combination training set and evaluation set are subsets of the original evaluation sets, where 70\% of the data is used for fitting the combinations and 30\% is used for evaluation.  The true labels (ground truth) for both CIFAR-10 and ImageNet correspond to the originally-provided labels for each of these datasets. In our experiments, (i) is fixed and we average over randomly selected splits for (ii) and (iii). The sampled human label for each image is fixed across all trials.

\paragraph{Calibration Methods.} In our experiments, model calibration is done by MAP temperature scaling (TS) \cite{guo2017calibration}. In Appendix \ref{sect:additional}, we experiment with two additional calibration methods: ensemble temperature scaling \cite{zhang2020mix} and I-Max binning \cite{patel2020multi}. We find that the combination performance is robust to the choice of calibration map, and hence restrict our focus to TS in this section. In addition, we find in general that combinations using uncalibrated model probabilities produce less accurate combinations than combinations using calibrated model probabilities (see Appendix \ref{sect:additional}).

\paragraph{Learning Curves.}
\begin{figure}
    \centering
    \includegraphics[]{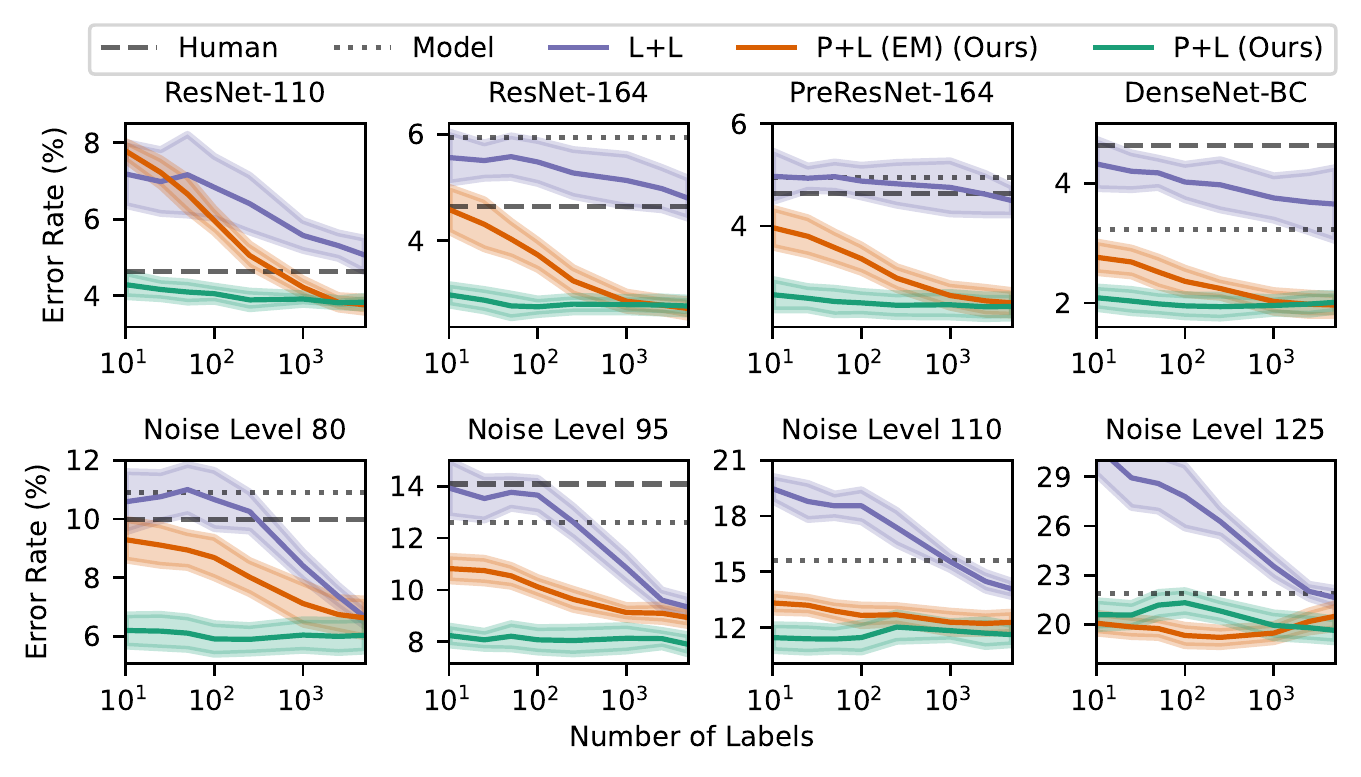}
    \caption{Learning curves for various models on CIFAR-10H (top) and VGG-19 on 
    \mbox{ImageNet-16H} (bottom). For the supervised methods (L+L, P+L), the x-axis corresponds to the number of human labels, machine predictions, and ground truth labels. For the unsupervised method (P+L (EM)), the x-axis corresponds to the number of human labels and machine predictions (but no ground truth).}
    \label{fig:learning_curves}
\end{figure}

Given that it is highly desirable to learn human-machine combinations from small amounts of data, we empirically study the data efficiency of the various inference methods previously described. In Figure \ref{fig:learning_curves}, we plot the combination error rate on the evaluation data as a function of dataset size for the four CIFAR-10 models (first row) and the VGG-19 model on ImageNet (second row). For supervised methods, the dataset size corresponds to the number of $(h(x), m(x), y)$ triples used for learning, whereas for unsupervised methods the dataset size corresponds to the number of $(h(x), m(x))$ pairs without ground truth $y$.

As Figure \ref{fig:learning_curves} demonstrates, the P+L method is able to learn a human-model combination that outperforms both the human and model alone, with very few datapoints. While the P+L-EM method requires more human labels than the P+L method, it is able to learn such a combination without any ground-truth labels. The baseline L+L method fails to learn an effective combination on the CIFAR-10 dataset, and only does so on the VGG-19 ImageNet dataset with a large number of ground truth labeled datapoints. This  demonstrates that the instance level probabilities from the model are a key component in efficiently learning human-model combinations with high accuracy.

In Appendix \ref{sect:addl_learning_curves}, we provide similar plots for GoogLeNet and for maximum likelihood based inference procedures. In general, we find that maximum likelihood estimation requires more data than MAP estimation, and hence we focus our presentation of results on MAP.

\paragraph{Calibration Properties of Combinations.}
In addition to the error rate, we study the calibration properties of P+L combinations. In Table \ref{table:cifar_calibration_mainpaper}, we report the ECE \cite{guo2017calibration}, classwise-ECE (cwECE) \cite{kull2019beyond, patel2020multi}, and negative log-likelihood (NLL) for our various CIFAR-10 models (Model) and the resulting human-machine combinations (Comb.) on the held-out evaluation set. The ECE and cwECE are evaluated using 15 bins containing an equal number of data points. In addition to P+L combinations fit with 10 or 5000 labeled datapoints, we evaluate a combination consisting of the uncalibrated classifier probabilities with the MAP human confusion matrix estimated with 5000 labeled datapoints (No Calibration).

Combining classifier probabilities with human labels generally results in a combination that is better calibrated than the model alone. Moreover, MAP TS can be fit using a very small number of labeled datapoints. Our results show that the calibration properties (of both the classifier alone and the resulting human-machine combination) significantly improve with only ten labeled examples. However, increasing the number of labeled examples to $5000$ does not result in further calibration gains. We provide similar results for our ImageNet-16H models in Appendix \ref{sect:additional_calibration}.

\begin{table}[!ht]
    \centering
    \resizebox{\columnwidth}{!}{%
    \begin{tabular}{llcccccc}
        \toprule
        \multicolumn{2}{c}{} &
        \multicolumn{2}{c}{No Calibration} & 
        \multicolumn{2}{c}{10 Datapoints} &
        \multicolumn{2}{c}{5000 Datapoints}\\
        \cmidrule(lr){3-4} \cmidrule(lr){5-6} \cmidrule(lr){7-8} 
         Metric & Model Name & Model & Comb. & Model & Comb. & Model & Comb. \\
        \midrule
        \multirow{4}{*}{ECE ($10^{-2}$)} 
        & ResNet-110 &  $5.23 \pm 0.35$ &  $2.08 \pm 0.25$ & $3.03 \pm 0.58$ & $1.30 \pm 0.23$ & $2.99 \pm 0.36$ & $1.76 \pm 0.18$ \\
        & ResNet-164 &  $2.98 \pm 0.34$ &  $1.63 \pm 0.23$ & $1.95 \pm 0.33$ & $1.25 \pm 0.18$ & $1.89 \pm 0.32$ & $1.39 \pm 0.18$ \\
        & PreResNet-164 &  $3.03 \pm 0.29$ &  $1.87 \pm 0.22$ & $2.31 \pm 0.33$ & $1.40 \pm 0.26$ & $2.27 \pm 0.31$ & $1.43 \pm 0.21$ \\
        & DenseNet-BC &  $2.18 \pm 0.27$ &  $1.53 \pm 0.20$ & $1.76 \pm 0.28$ & $1.34 \pm 0.14$ & $1.73 \pm 0.28$ & $1.27 \pm 0.13$ \\
        \midrule
        \multirow{4}{*}{cwECE ($10^{-2}$)}
        & ResNet-110 &  $0.81 \pm 0.07$ &  $0.23 \pm 0.05$ & $0.58 \pm 0.07$ & $0.24 \pm 0.05$ & $0.58 \pm 0.06$ & $0.19 \pm 0.06$ \\
        & ResNet-164 &  $0.39 \pm 0.06$ &  $0.15 \pm 0.03$ &  $0.31 \pm 0.05$ & $0.15 \pm 0.04$ & $0.31 \pm 0.05$ & $0.13 \pm 0.03$ \\
        & PreResNet-164 &  $0.29 \pm 0.04$ &  $0.13 \pm 0.03$ & $0.28 \pm 0.04$ & $0.13 \pm 0.03$ & $0.28 \pm 0.04$ & $0.13 \pm 0.03$ \\
        & DenseNet-BC &  $0.23 \pm 0.03$ &  $0.11 \pm 0.02$ &  $0.24 \pm 0.02$ & $0.12 \pm 0.02$ & $0.24 \pm 0.02$ & $0.11 \pm 0.02$ \\
        \midrule
        \multirow{4}{*}{NLL}
        &  ResNet-110 &  $0.40 \pm 0.02$ &  $0.16 \pm 0.01$ & $0.35 \pm 0.02$ & $0.15 \pm 0.01$ & $0.35 \pm 0.02$ & $0.14 \pm 0.01$ \\
        & ResNet-164 &  $0.24 \pm 0.02$ &  $0.11 \pm 0.01$ & $0.20 \pm 0.01$ & $0.10 \pm 0.01$ & $0.20 \pm 0.01$ & $0.10 \pm 0.01$ \\
        &  PreResNet-164 &  $0.23 \pm 0.02$ &  $0.13 \pm 0.02$ & $0.19 \pm 0.02$ & $0.11 \pm 0.01$ & $0.19 \pm 0.02$ & $0.10 \pm 0.01$ \\
        & DenseNet-BC &  $0.17 \pm 0.01$ &  $0.10 \pm 0.01$ & $0.14 \pm 0.01$ & $0.09 \pm 0.01$ & $0.14 \pm 0.01$ & $0.08 \pm 0.01$ \\
        \bottomrule \\
    \end{tabular}
    } 
    \caption{Calibration metrics for various CIFAR-10H models with P+L combinations. Even a small amount of labeled data (10 labels) reduces the calibration error of both the classifier and combination. For all metrics, lower is better.}
    \label{table:cifar_calibration_mainpaper}
\end{table}

\section{Theoretical Analysis}
\label{sect:theory}

\paragraph{Confidence Ratios.}
The key quantity in our analysis is the \emph{confidence ratio} of a predictor, which is a random variable representing a predictor's confidence for the correct class relative to the predictor's confidence for other classes. This quantity can be thought of as the predictor's instance-level odds for making a correct prediction. More specifically, the confidence ratios  $r_m$ and $r_h$ for machine and human labelers are defined as

\begin{equation} \label{eqn:confidence_ratios}
    r_m(x) = \frac{m^\theta_y (x) }{1 - m^\theta_y(x)} \qquad r_h(x) = \frac{ \varphi_{h(x) y}  }{1 -\varphi_{h(x) y} }
\end{equation}

We note that unlike the machine classifier, the human does not directly output such confidences -- rather, this quantity is estimated empirically through the human's confusion matrix. 

If the model has a confidence ratio of $r_m(x) > 1$ (indicating that the model has confidence greater than $0.5$ for the correct class), then the model is guaranteed to correctly label $x$. On the other hand, $r_m(x) > 1$ is not sufficient for the combination to correctly label $x$ -- instead, the model must be sufficiently confident in its prediction as well. The following theorem formalizes this notion by a lower bound on the accuracy of the combination in terms of the confidence ratios of the individual predictors. For binary classification tasks, this lower bound is achieved, i.e. we have equality.

\begin{theorem}[Combination Model Accuracy.] \label{theorem:confidence_ratios}
The accuracy of the P+L combination $c(x)$ is at least the probability that the confidence ratio for $m$ exceeds the inverse confidence ratio for $h$. 

\begin{equation}
    \Exp{\ind{c(x) = y}} \geq p(r_m(x) \geq \left( r_h(x) \right)^{-1})
\end{equation}
\end{theorem}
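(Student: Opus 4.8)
The plan is to prove the pointwise domination $\ind{c(x) = y} \ge \ind{r_m(x) \ge r_h(x)^{-1}}$ and then take expectations; since $\Exp{\ind{r_m(x) \ge r_h(x)^{-1}}} = p(r_m(x) \ge r_h(x)^{-1})$, this gives the theorem. Here $c(x)$ is a maximizer of the combined posterior \eqref{eqn:calibrate_confuse_combo}, which — because the denominator there does not depend on the class index — is $c(x) \in \arg\max_j \varphi_{h(x), j}\, m_j^\theta(x)$ (ties live on a boundary set that may be broken toward $y$, which only helps a lower bound). So it suffices to prove: if $r_m(x) \ge r_h(x)^{-1}$ then $y$ attains $\max_j \varphi_{h(x), j}\, m_j^\theta(x)$.

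First I would rewrite the hypothesis symmetrically. Both $1 - m_y^\theta(x)$ and $1 - \varphi_{h(x), y}$ are nonnegative, so clearing denominators in $m_y^\theta(x)/(1 - m_y^\theta(x)) \ge (1 - \varphi_{h(x), y})/\varphi_{h(x), y}$ gives the equivalent inequality
\[
    \varphi_{h(x), y}\, m_y^\theta(x) \;\ge\; \bigl(1 - \varphi_{h(x), y}\bigr)\bigl(1 - m_y^\theta(x)\bigr) .
\]
Then I would bound the score of every competitor $k \ne y$ by this same right-hand side: $m_k^\theta(x) \le 1 - m_y^\theta(x)$ because $m^\theta(x)$ lies in the simplex, and $\varphi_{h(x), k} \le 1 - \varphi_{h(x), y}$ because the entries of row $h(x)$ of the confusion matrix corresponding to labels $\neq y$ together carry mass $1 - \varphi_{h(x), y}$ (we may normalize that row, which leaves \eqref{eqn:calibrate_confuse_combo} and hence $c(x)$ unchanged). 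Multiplying, $\varphi_{h(x), k}\, m_k^\theta(x) \le (1 - \varphi_{h(x), y})(1 - m_y^\theta(x)) \le \varphi_{h(x), y}\, m_y^\theta(x)$ for all $k \ne y$, so $c(x) = y$. For $K = 2$ the lone competitor $\bar y$ has $m_{\bar y}^\theta(x) = 1 - m_y^\theta(x)$ and $\varphi_{h(x), \bar y} = 1 - \varphi_{h(x), y}$ exactly, so every inequality above holds with equality and $c(x) = y \iff m_y^\theta(x) + \varphi_{h(x), y} \ge 1 \iff r_m(x) \ge r_h(x)^{-1}$; the two events agree up to a null set, so the bound is an equality.

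The step I expect to need the most care is the bound $\varphi_{h(x), k} \le 1 - \varphi_{h(x), y}$, together with the bookkeeping that $1 - \varphi_{h(x), y}$ — the denominator appearing in $r_h$ — really is the total confusion-matrix weight that the human's label $h(x)$ assigns to classes other than the true one; one must check that this is the normalization consistent with how $\varphi$ enters \eqref{eqn:calibrate_confuse_combo} (it is, by the rescaling invariance just used). This is also precisely where slack enters when $K \ge 3$: several competing terms are each bounded crudely by the same product $(1 - \varphi_{h(x), y})(1 - m_y^\theta(x))$, so the result is only an inequality in general but is tight for $K = 2$. Finally, the degenerate cases $m_y^\theta(x) \in \{0,1\}$ or $\varphi_{h(x), y} \in \{0,1\}$ (where $r_m$ or $r_h$ is $0$ or $\infty$) should be handled separately, but the conclusion is immediate in each.
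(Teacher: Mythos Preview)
Your proof is correct and follows essentially the same route as the paper: both arguments bound every competitor score $\varphi_{h(x),k}\,m_k^\theta(x)$ by the product $(1-\varphi_{h(x),y})(1-m_y^\theta(x))$ via $m_k^\theta(x)\le 1-m_y^\theta(x)$ and $\varphi_{h(x),k}\le 1-\varphi_{h(x),y}$, with your version phrased pointwise on indicators rather than directly on probabilities. Your explicit flagging of the row-normalization needed for the second bound (and of the $K=2$ equality case) is, if anything, more careful than the paper's own write-up, which applies the same inequality without comment.
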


A detailed proof is provided in Appendix \ref{sec:proofs_1}. An analogous result holds for the combination of two probabilistic classifiers or two non-probabilistic classifiers. As our focus is on combining human predictions with model probabilities, we discuss these cases in Appendix \ref{sec:proofs_1}.

Theorem \eqref{theorem:confidence_ratios} is further illustrated in Figure \ref{fig:confidence_ratio_illustration} for a ResNet-164 classifier on CIFAR-10H (first row) and a VGG-19 classifier on ImageNet-16H (second row). For each row, we create a family of classification models where each model makes the same class-level predictions (and hence has the same error rate) but with different confidence ratio distributions. This is achieved by tempering the output probabilities of a base classifier via the map $m(X) \mapsto (m_1(X)^{1/T}, \dots, m_k(X)^{1/T}) / \sum_{i=1}^K m_i(X)^{1/T}$ with temperature $T > 0$ \cite{guo2017calibration, zhang2020mix}. In the first column of each row, the solid curve plots the error rate of the combination of a fixed human with the various classification models. Despite each classification model having the same accuracy, the accuracy of the resulting human-model combination varies.

This behavior can be explained by Theorem \eqref{theorem:confidence_ratios}, which tells us that the combination accuracy is driven not only by the human and machine accuracies but by their confidence ratios as well.  At large temperatures (purple), the classifier becomes underconfident in its predictions, and the combination error rate approaches the human error rate. At small temperatures (green), the model becomes overconfident in its predictions, and the combination error rate approaches the model error rate. When the combination is fit by our P+L method (orange), the classifier is well-calibrated (reflected by its low ECE), and the resulting combination  obtains a lower error rate than each of the under- and over-confident classifier combinations.

\begin{figure}
    \centering
    \includegraphics[height=6cm]{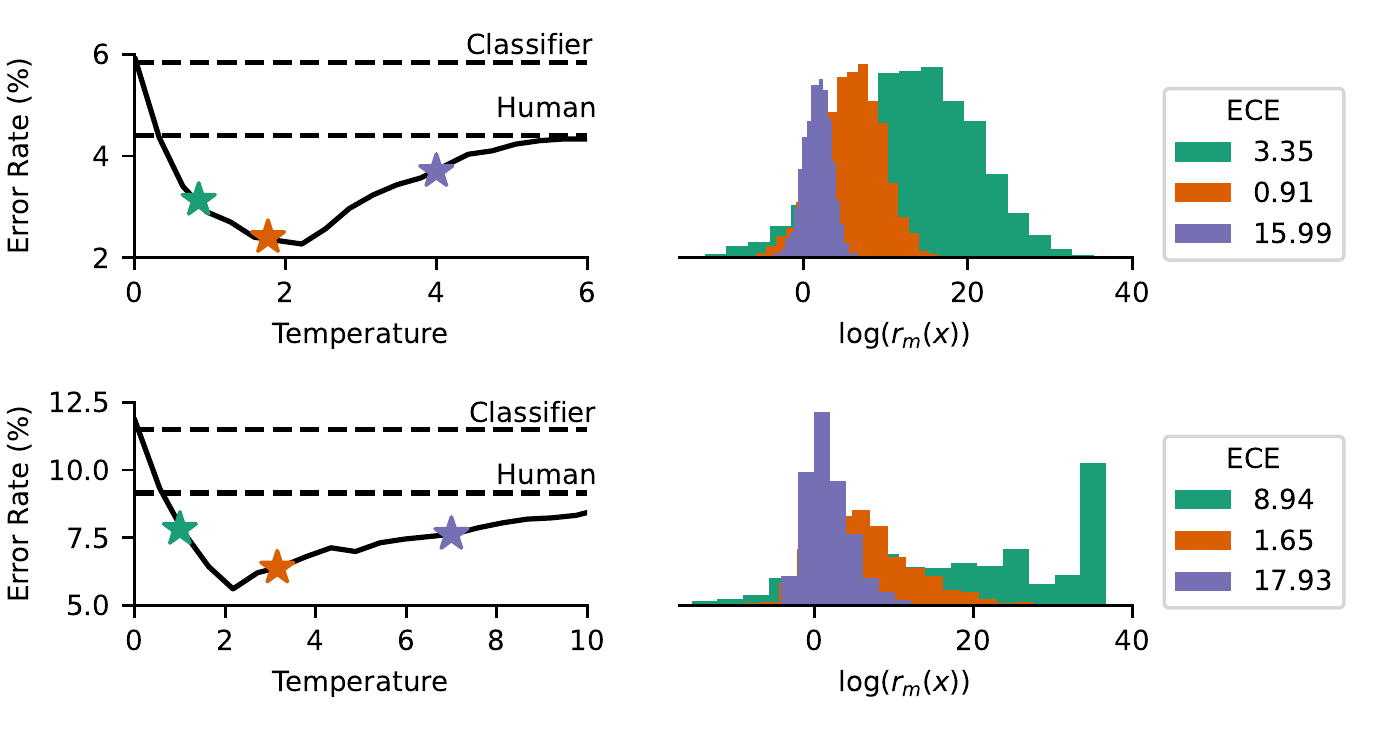}
    \caption{
    Left: Single human labeler combined with various classifiers of equal accuracy, resulting in combinations with varied accuracies. The orange point corresponds to the P+L combination. Right: First row: ResNet-164 on CIFAR-10H. Second row: VGG-19 on ImageNet-16H at noise level 80.}
    \label{fig:confidence_ratio_illustration}
\end{figure}

\paragraph{Relationship between combination and calibration/confusion error.}
We additionally quantify the estimation error in our P+L method, incurred by  empirically estimating the human confusion matrix and calibration parameters. The  result below shows we can upper bound our estimation error by the estimation error for the confusion matrix and the $\ell_1$ marginal calibration error (MCE)  \cite{kumar2019verified}. 
\begin{theorem}[Estimation Error Upper Bound.] \label{theorem:estimation_error}

Let $\eta(x, y) = |p(h(x) | y )p(y | m(x))  - m_y^\theta(x) \widehat{\varphi}_{h(x)y}|$  be the estimation error (up to normalizing constants), where $\widehat{\varphi}_{ij}$ represents an estimate of $p(h(x) = i | y = j)$. Under the CI assumption, in expectation over random $(x, y)$ pairs, 
\begin{align}
    \Exp[(X,Y)\sim\DD]{\eta(X, Y)} \leq ||\varphi - \widehat{\varphi} ||_1 + \text{MCE}(m^\theta)
\end{align}
\end{theorem}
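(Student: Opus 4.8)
The plan is to bound the quantity $\eta(X,Y)$ pointwise by adding and subtracting an intermediate term that isolates the two sources of estimation error: the error in the confusion matrix estimate $\widehat{\varphi}$, and the error between the calibrated classifier output $m^\theta(x)$ and the true posterior $p(y\mid m(x))$. Concretely, under the CI assumption we have $p(h(x)\mid y)\,p(y\mid m(x)) = \varphi_{h(x)y}\,p(y\mid m(x))$, so I would write
\begin{align*}
\eta(x,y) &= \left| \varphi_{h(x)y}\, p(y\mid m(x)) - m_y^\theta(x)\,\widehat{\varphi}_{h(x)y} \right| \\
&\leq \left| \varphi_{h(x)y}\, p(y\mid m(x)) - \varphi_{h(x)y}\, m_y^\theta(x) \right| + \left| \varphi_{h(x)y}\, m_y^\theta(x) - \widehat{\varphi}_{h(x)y}\, m_y^\theta(x) \right| \\
&= \varphi_{h(x)y}\left| p(y\mid m(x)) - m_y^\theta(x) \right| + m_y^\theta(x)\left| \varphi_{h(x)y} - \widehat{\varphi}_{h(x)y} \right|.
\end{align*}
Since $\varphi_{h(x)y}\leq 1$ and $m_y^\theta(x)\leq 1$, each coefficient can be dropped, giving $\eta(x,y)\leq |p(y\mid m(x)) - m_y^\theta(x)| + |\varphi_{h(x)y} - \widehat{\varphi}_{h(x)y}|$.

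Next I would take the expectation over $(X,Y)\sim\DD$ and handle the two resulting terms separately. For the calibration term, $\Exp[(X,Y)\sim\DD]{|p(Y\mid m(X)) - m_Y^\theta(X)|}$ is precisely (or is bounded by) the $\ell_1$ marginal calibration error $\text{MCE}(m^\theta)$ as defined in \cite{kumar2019verified}: conditioning on the random value $m^\theta(X)$, the inner expectation of $\ind{Y=j}$ over $Y$ recovers $p(Y=j\mid m^\theta(X))$, and summing the absolute deviation from $m_j^\theta(X)$ against the true class distribution matches the MCE definition. I would make this identification explicit, being careful about whether the MCE is defined with respect to $m(X)$ or $m^\theta(X)$ and whether it uses the full probability vector or the top label; the statement as written suggests the vector-valued marginal calibration error evaluated at the true-class coordinate, so I would phrase the bound to match that convention.

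For the confusion-matrix term, I would argue $\Exp[(X,Y)\sim\DD]{|\varphi_{h(X)Y} - \widehat{\varphi}_{h(X)Y}|} \leq \|\varphi - \widehat{\varphi}\|_1$. The point is that for each fixed $(x,y)$, the index $h(x)$ is some particular row $i$, so $|\varphi_{h(x)y} - \widehat{\varphi}_{h(x)y}| = |\varphi_{iy} - \widehat{\varphi}_{iy}|$ is a single entry of the matrix difference, which is at most the sum of absolute values of all entries, $\|\varphi-\widehat{\varphi}\|_1 := \sum_{i,j}|\varphi_{ij}-\widehat{\varphi}_{ij}|$. Taking expectations preserves the inequality. (Alternatively, if the intended norm is column-wise or operator-based, I would adjust by conditioning on $Y=y$ and noting that $\Exp{|\varphi_{h(X)y}-\widehat{\varphi}_{h(X)y}|\mid Y=y}$ is an average over the true human-label distribution $\varphi_{\cdot y}$, which is bounded by $\max_y \|\varphi_{\cdot y} - \widehat{\varphi}_{\cdot y}\|_\infty$ or $\|\varphi_{\cdot y}-\widehat{\varphi}_{\cdot y}\|_1$; the cleanest matching bound is the entrywise $\ell_1$ norm of the whole matrix.) Combining the two bounds via linearity of expectation yields the claim.

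The main obstacle I anticipate is purely definitional bookkeeping rather than a substantive mathematical difficulty: pinning down the exact conventions for $\text{MCE}(m^\theta)$ (argument $m$ versus $m^\theta$; top-label versus full-vector; with or without an extra factor from summing over classes) and for $\|\varphi-\widehat{\varphi}\|_1$ (entrywise sum versus induced matrix norm), and making sure the pointwise inequality above is stated with coefficients that are genuinely $\leq 1$ under whichever normalization is in force. Once those conventions are fixed, the argument is a one-line triangle inequality plus two applications of "expectation of a nonnegative single-coordinate deviation is dominated by the aggregate error." I would also double-check the "up to normalizing constants" caveat in the theorem statement — the quantity $\eta$ compares unnormalized numerators of Equation \eqref{eqn:calibrate_confuse_combo}, so no normalization constant appears and the bound is clean; if instead normalized posteriors were compared, an extra step bounding the ratio perturbation would be needed, but the stated $\eta$ avoids this.
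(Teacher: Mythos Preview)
Your proposal is correct and follows essentially the same approach as the paper: the paper packages your add--subtract/triangle-inequality step as a standalone lemma ($|a_1b_1-a_2b_2|\le|a_1-a_2|+|b_1-b_2|$ for $a_i,b_i\in[0,1]$), then conditions on $(y=j,h(x)=i)$ and invokes CI to drop the $h(x)=i$ conditioning from the calibration term, arriving at the same two summands you obtain directly from the pointwise bound. Your anticipated ``definitional bookkeeping'' is exactly right---the paper takes $\|\varphi-\widehat\varphi\|_1$ as the entrywise sum and simply identifies the remaining expectation with the $\ell_1$ MCE of \cite{kumar2019verified} without further unpacking.
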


(Proof in Appendix \ref{sec:proofs_1}). With a sufficient amount of labeled data, the confusion matrix error $||\varphi - \widehat{\varphi} ||_1$ can be made arbitrarily small (i.e. the MAP confusion matrix estimate is unbiased). This is not necessarily the case for $\text{MCE}(m^\theta)$. However, if an asymptotically unbiased calibration method is used, this result guarantees that our posterior estimation error will converge to zero.

\section{Limitations, Societal Impact, and Conclusions}
\label{sect:conclusion}

\paragraph{Limitations.} 
One limitation of our work is that our experiments only involve two datasets and both involve image classification. Thus, there is no guarantee that similar results (in terms of combined improvements) are achievable for other tasks, such as question-answering from text data or more general problem-solving tasks. Another potential limitation is our reliance on conditional independence in our approach. A reverse view of this is that both our theoretical and experimental results demonstrate that there is ample room for improving human and machine performance by combining their predictions, even without taking dependence into account.

\paragraph{Potential Societal Impacts.} Combining human and machine predictions to improve overall classification accuracy has the potential for positive societal benefit, particularly for example in high-stakes applications such as medical image diagnosis and autonomous driving. However, there are also potential negative societal impacts. For example, if there is a lack of transparency in terms of how the system operates (e.g., how predictions are being combined to arrive at a final result), augmenting an individual's predictions with  a machine's could have negative psychological consequences for the individual, such as decreasing trust, reducing individual autonomy, and eventual disengagement. 

\paragraph{Conclusions.}
We investigated methods for combining predictions using instance-level confidence from a model and class-level information from a human. Across a variety of image classification experiments our proposed combination framework  leads to systematic increases in accuracy over both the model and human alone  while requiring few human labels.   Supporting theory illustrates how combined human-model performance is affected by calibration properties of the model.

\newpage

\bibliographystyle{unsrtnat}
\bibliography{refs,refs_mark}

\begin{appendices}

\section{Proofs of Theorem \eqref{theorem:confidence_ratios} and Theorem \eqref{theorem:estimation_error}} \label{sec:proofs_1}

We provide proofs for our theoretical claims in Section \ref{sect:theory}.

\subsection{Confidence Ratios}

\begin{proof}[Proof of Theorem \eqref{theorem:confidence_ratios}]

Recall that $c(x)$ is the prediction output by Equation \eqref{eqn:calibrate_confuse_combo}. The accuracy is then bounded as follows:
\begin{align*}
    &\Exp{\ind{c(x) = y }} = \Prob{y = \arg \max_k \varphi_{h(x) k} m_k^\theta(x) } \\
    &= \Prob{\varphi_{h(x) y} m_y^\theta(x) > \max_{k \neq y } \varphi_{h(x) k} m_k^\theta(x)} \\
    &\geq \Prob{\varphi_{h(x) y} m_y^\theta(x) > \max_{k \neq y } \varphi_{h(x) k} \max_{k \neq y} m_k^\theta(x)} \\
    &\geq \Prob{\varphi_{h(x) y} m_y^\theta(x) > \left( 1 -  \varphi_{h(x) y} \right) \left( 1 -  m_y^\theta(x) \right)} \\
    &= \Prob{r_m(x) > \left(r_h(x) \right)^{-1}}
\end{align*}

In fact, we have proved the stronger but somewhat less interpretable inequality:
\begin{equation*}
    \Exp{\ind{c(x) = y }} \geq \Prob{\frac{m_y^\theta(x)}{\max_{k \neq y} m_k^\theta(x) } > \left(\frac{\varphi_{h(x) y}}{\max_{k \neq y} \varphi_{h(x)k}} \right)^{-1}}
\end{equation*}
\end{proof}

We note further that the same argument can be used to analyze the combination of two probabilistic predictors when the combination is done by pointwise multiplying their calibrated probabilities. In particular, if we have two probabilistic classifiers $m$ and $\widetilde{m}$,

\begin{equation*}
    \Exp{\ind{c(x) = y }} \geq \Prob{r_m(x) > \left(r_{\widetilde{m}} (x) \right)^{-1}}
\end{equation*}

where $r_{\widetilde{m}}$ is defined analogously to $r_m$. The proof for this statement is exactly analogous to that of Theorem \eqref{theorem:confidence_ratios}, where $\widetilde{m}_y^{\widetilde{\theta}}$ now plays the role of $\varphi_{h(x)y}$. This same argument can again be adapted for the combination of two non-probabilistic combiners, combined by parameterizing Equation \eqref{eqn:combo_eqn} with their confusion matrices.

\subsection{Estimation Error}
We begin with a useful lemma that will play a key part in our estimation error analysis. 

\begin{lemma} \label{lemma:diff_prod_leq_sum_diff}
For scalars $a_1, a_2, b_1, b_2 \in [0, 1]$, the difference of the products is at most the sum of the differences:
\begin{equation}
    |a_1 b_1 - a_2 b_2| \leq |a_1 - a_2| + |b_1 - b_2| 
\end{equation}
\end{lemma}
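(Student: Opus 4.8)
The plan is to use the standard ``add and subtract'' (telescoping) trick to turn the difference of products into a sum of two terms, each of which contains exactly one of the two differences $|a_1 - a_2|$ or $|b_1 - b_2|$, and then bound the remaining factor by $1$ using the hypothesis that all quantities lie in $[0,1]$.

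Concretely, first I would write
\begin{equation*}
    a_1 b_1 - a_2 b_2 = a_1 b_1 - a_1 b_2 + a_1 b_2 - a_2 b_2 = a_1 (b_1 - b_2) + b_2 (a_1 - a_2).
\end{equation*}
Next I would apply the triangle inequality to obtain $|a_1 b_1 - a_2 b_2| \leq |a_1|\,|b_1 - b_2| + |b_2|\,|a_1 - a_2|$. Finally, since $a_1, b_2 \in [0,1]$ we have $|a_1| \leq 1$ and $|b_2| \leq 1$, so each coefficient can be dropped, yielding $|a_1 b_1 - a_2 b_2| \leq |b_1 - b_2| + |a_1 - a_2|$, which is the claim.

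There is essentially no obstacle here: the only thing to be careful about is choosing the telescoping split so that the factors left multiplying the differences ($a_1$ and $b_2$) are among the variables assumed to lie in $[0,1]$ — any of the four symmetric choices works, but one should state it explicitly so the bound $|\cdot| \leq 1$ is clearly justified. One could also note in passing that the bracketed quantities $b_1 - b_2$ and $a_1 - a_2$ need not be in $[0,1]$, which is why the absolute values are kept throughout rather than dropped.
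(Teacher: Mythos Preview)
Your proposal is correct and essentially identical to the paper's own proof: the paper adds and subtracts $a_1 b_2$, factors to get $a_1(b_1-b_2)+b_2(a_1-a_2)$, applies the triangle inequality, and then uses $|a_1|\le 1$ and $|b_2|\le 1$. There is nothing to add or change.
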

\begin{proof}
\begin{align*}
    |a_1 b_1 - a_2 b_2 | &= |a_1 b_1 - a_2 b_2 + a_1 b_2 - a_1 b_2 | \\
    &= |a_1 (b_1 - b_2) + b_2(a_1 - a_2)| \\
    &\leq |a_1| \cdot |b_1 - b_2| + |b_2| \cdot |a_1 - a_2| \qquad \text{(triangle inequality)} \\
    & \leq |b_1 - b_2| + |a_1 - a_2|
\end{align*}
\end{proof}

We now proceed to the proof of Theorem \eqref{theorem:estimation_error}.

\begin{proof}[Proof of Theorem \eqref{theorem:estimation_error}]

Recall that $\eta(x, y) = |p(h(x) | y )p(y | m(x))  - m_y^\theta(x) \widehat{\varphi}_{h(x)y}|$  is the estimation error for Equation \eqref{eqn:calibrate_confuse_combo} (up to normalizing constants), where $\widehat{\varphi}_{ij}$ represents an estimate of $p(h(x) = i | y = j)$. 

By the law of total expectation, we can condition on a particular value of $y$ and $h(x)$:
\begin{equation} \label{eqn:law_total_exp}
    \Exp{\eta(x, y)} = \sum_{i=1}^K \sum_{j=1}^K p(y = j) \varphi_{ij} \Exp{\eta(x, y) | y = j, h(x) = i}
\end{equation}

We now apply Lemma \eqref{lemma:diff_prod_leq_sum_diff} to the conditional expectation above:
\begin{align*}
    &\Exp{\eta(x, y) | y = j, h(x) = i} \\
    &= \Exp{| \varphi_{ij} p(y = j | m(x)) - \widehat{\varphi}_{ij} m_j^\theta(X) |  \bigg| y = j, h(x) = i } \\
    &\leq \Exp{|\varphi_{ij} - \widehat{\varphi}_{ij} | + |p(y = j | m(x)) - m_j^\theta(x) | \bigg| y = j, h(x) = i  } \\
    &= |\varphi_{ij} - \widehat{\varphi}_{ij} | + \Exp{|p(y = j | m(x)) - m_j^\theta(x) | \bigg| y = j } 
\end{align*}

We additionally employ the conditional independence assumption to arrive at the last line. 

Plugging this back in to Equation \eqref{eqn:law_total_exp}, we obtain
\begin{align*}
    \Exp{\eta(x, y)} \leq &\sum_{i=1}^K \sum_{j=1}^K P(y = j) \varphi_{ij} |\varphi_{ij} - \widehat{\varphi}_{ij} | \\
    + &\sum_{j=1}^K p(y = j) \Exp{|p(y = j | m(x)) - m_j^\theta(x) | \bigg| y = j } 
\end{align*}

Since $\varphi_{ij}, p(y = 1) \leq 1$, the first summand is at most $\sum_{i=1}^K \sum_{j=1}^K |\varphi_{ij} - \widehat{\varphi}_{ij} | = ||\varphi - \widehat{\varphi}||_1$. In fact, the first summand is typically much smaller than $||\varphi - \widehat{\varphi}||_1$ -- for example, if all classes are equally likely, the first summand is at most $\frac{1}{K} ||\varphi - \widehat{\varphi} ||_1 $.

The second summand is readily recognized as the $\ell_1$ marginal calibration error \cite{kumar2019verified}. 
\end{proof}

\newpage
\section{Assessing Conditional Independence/Dependence in CIFAR-10H and Imagenet-16H Datasets}
\label{sect:assessing}

We investigate the degree to which our conditional independence assumption  is satisfied empirically in the datasets used in the paper.  Specifically, of interest is the assumption of conditional independence of $m(x)$ and $h(x)$, given $y$. Assessing conditional independence is not straightforward given that $m(x)$ is a $K$-dimensional real-valued vector and $h(x)$ and $y$ each take one of $K$ categorical values, with $K=10$ for CIFAR-10H and $K=16$ for ImageNet-16H. While there exist statistical tests for assessing conditional independence for categorical random variables, with real-valued variables the situation is less straightforward and there are multiple options such as different non-parametric tests involving different tradeoffs \citeappendix{runge2018conditional,marx2019testing, mukherjee2020ccmi,berrett2020conditional}.

Given these issues we investigate the degree of conditional dependence using two relatively simple approaches. The first approach looks at the conditional mutual information (CMI) between the predicted label from the model and the predicted label from the human, conditioned on the true label. While this is indirect, in that it does not use the real-valued scores, it does allow us to measure CMI in a straightforward manner given that all the variables involved are categorical. The CMI is defined as
\[
\mbox{CMI}(M ; H | Y) \ = \ \sum_y p(y) \sum_{m, h} p(m, h | y) \log \frac{ p(m, h| y)}{p(m|y) p(h|y)}
\] 
where $M, H, Y$ are the $K$-ary random variables for the model, human, and true labels respectively (taking values $m, h, y$). The inner sum over $m, h$ is the mutual information between $M$ and $H$ conditioned on a particular value of $Y=y$. All probabilities were estimated using relative frequencies (maximum likelihood) from the evaluation sets for each dataset.

\begin{table}[!t]
    \centering
    \caption{Conditional and unconditional mutual information for various datasets and models.}
    \begin{tabular}{ccccc}
        \toprule
         Dataset & Model & Noise & CMI$(M ; H | Y)$ & MI$(M ; H)$ \\
         \midrule
          CIFAR-10H &  Densenet & &  0.030 & 2.829 \\
          CIFAR-10H &  PreResnet-164 & &  0.043 & 2.770 \\
          CIFAR-10H &  Resnet-110 & &  0.037 & 2.404 \\
          CIFAR-10H &  Resnet-164 & &  0.038 & 2.707 \\
          ImageNet-16H &  VGG-19 & 80 &  0.119 & 2.954 \\
          ImageNet-16H &  VGG-19 & 95 &  0.174 & 2.816 \\
          ImageNet-16H &  VGG-19 & 110 &  0.230 & 2.277 \\
          ImageNet-16H &  VGG-19 & 125 &  0.314 & 1.527 \\
          ImageNet-16H &  GoogLeNet & 80 &  0.121 & 2.825 \\
          ImageNet-16H &  GoogLeNet & 90 &  0.161 & 2.643 \\
          ImageNet-16H &  GoogLeNet & 110 &  0.260 & 2.182 \\
          ImageNet-16H &  GoogLeNet & 125 &  0.364 & 1.421 \\
        \bottomrule \\
    \end{tabular}
    \label{tab:cmi}
\end{table} 

Table \ref{tab:cmi} shows the results for the 4 different models for CIFAR-10H and the $2 \times 4$ different combinations of models and noise for ImageNet-16H. To put the CMI numbers on an interpretable scale, we also compute the (unconditional) mutual information between $M$ and $H$ in each case. If $M$ and $H$ are truly independent conditioned on $Y$, then the true CMI values should be 0.

The broad conclusion from Table \ref{tab:cmi} is that  for the CIFAR-10H there appears to be little to no conditional dependence (of model labels and human labels, given true labels) given that the CMI values are very close to 0. For the ImageNet-16H data the CMI values are higher, suggesting evidence for weak conditional dependence in this dataset, particularly at high noise levels where neither the human or the model are very accurate. 

Figures \ref{fig:cifarCI}, \ref{fig:vggCI}, \ref{fig:googlenetCI} show  the results of another assessment, now using model probabilities, for the 4 models for the CIFAR-10H data, for VGG-19 on ImageNet-16H, and for GoogLeNet on ImageNet-16H, respectively. The x-axis in each plot is the mean probability from the model for the true label $y$, conditioned on $Y=y$. The $y$-axis shows the mean probability (in red) from the model for the true label $y$, conditioned now on {\bf both}  $Y=y$ {\bf and} $H=y$, i.e., conditioned on the event that the human also predicts the true label. 

If the model's probabilities for the true labels are independent of $H=y$, then the x and y values should be the same (i.e., on the diagonal). The degree to which these points (in red) are not on the diagonal is an indication of some conditional dependence of the model's probabilities on the human labels $h$. The red points are generally close to the diagonal, or slightly above (indicating, not surprisingly, that if the human predicts the true label, the model's probability for the true label tends to increase slightly (if at all) rather than decrease.) To put these values on an appropriate scale we also compute (empirically from the data) the maximum possible increase that could occur, when additionally conditioning on the human label $h$ being correct (the black points). The conclusions are   similar to what we found with conditional mutual information, namely, that there is little indication of conditional dependence in the CIFAR-10H data, and some indication of dependence in the ImageNet-16H data, particularly for higher noise levels.

\begin{figure}[hbtp]
    \centering
    \includegraphics[width=4in]{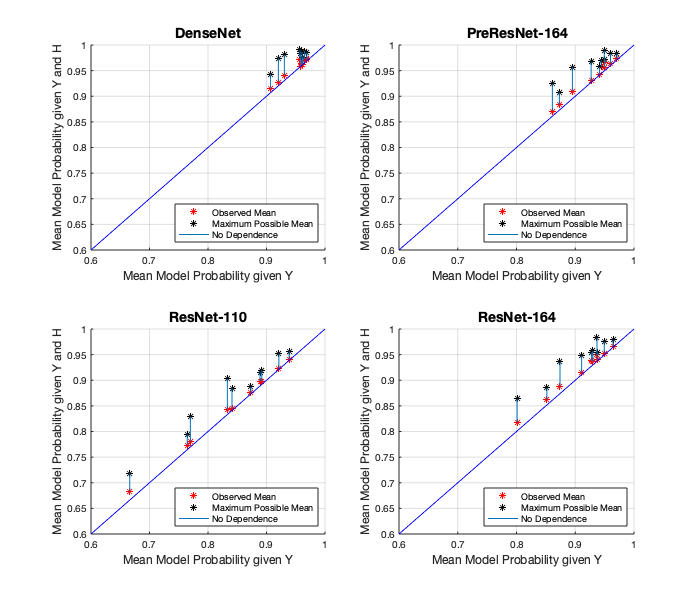} 
    \caption{Change in expected values of model probabilities on CIFAR-10H data for the true class $y$, conditioning on just $y$ (x-axis), versus conditioning on both $y$ and $h(x) = y$ (y-axis, in red).}
    \label{fig:cifarCI}
\end{figure}

\begin{figure}[hbtp]
    \centering
    \includegraphics[width=4in]{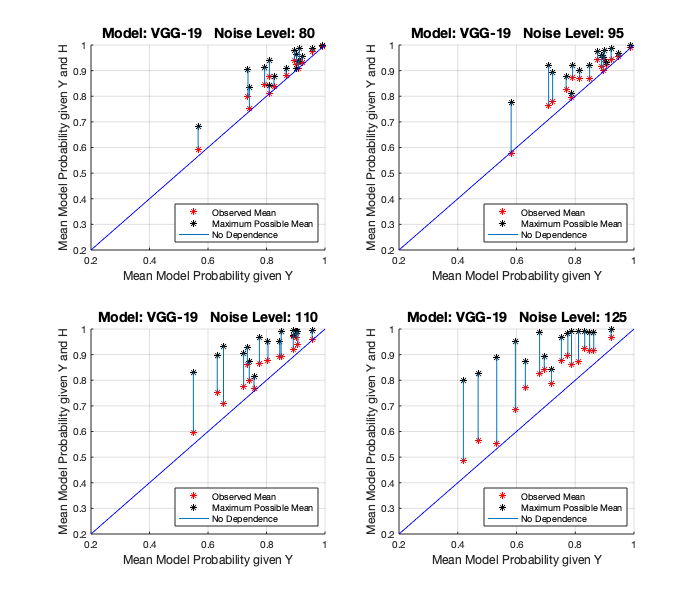} 
    \caption{Same as Figure \ref{fig:cifarCI} but for VGG-19 models on ImageNet-16H data.}
    \label{fig:vggCI}
\end{figure}

\begin{figure}[hbtp]
    \centering
    \includegraphics[width=4in]{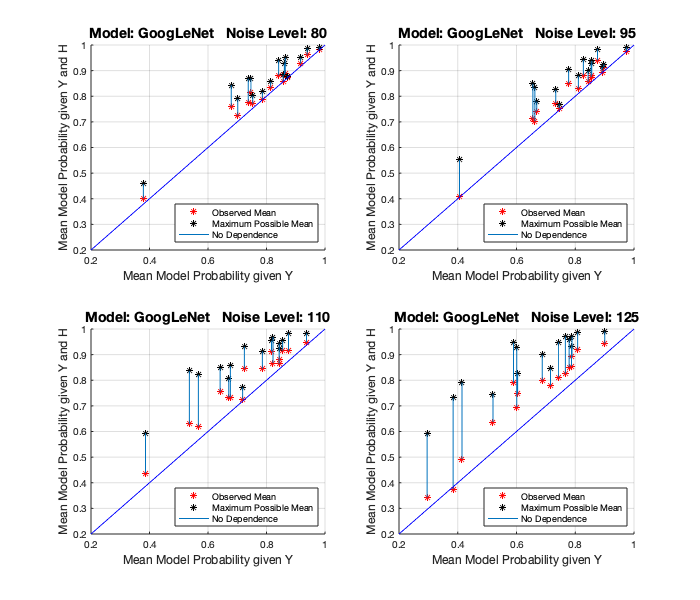} 
    \caption{Same as Figure \ref{fig:cifarCI} but for   GoogLeNet models on ImageNet-16H data.}
    \label{fig:googlenetCI}
\end{figure}

\FloatBarrier
\newpage
\section{Calibration Methods and Uncalibrated Combinations}
\label{sect:additional}

In this section we provide additional empirical results on CIFAR-10H and ImageNet-16H. In particular, we evaluate several different calibration methods (MAP TS (as used in the main paper) \cite{guo2017calibration}, Ensemble TS \cite{zhang2020mix}, IMax Binning \cite{patel2020multi}). We also compare to the L+L combination, and the P+L combination of the uncalibrated model probabilities with the human labels (Uncalibrated). The error rate of the human alone (Human) and model alone (Model) are provided for context.

In most cases, human-machine combinations using calibrated probabilities outperform those using uncalibrated probabilities. Moreover, in some cases we obtain small gains in performance by using a more complex calibration map (IMax Binning), but it is not clear how to incorporate prior information with this method. As prior information is useful in increasing the label efficiency and decreasing the error rate of the combination, our focus in the main paper is on MAP TS as our calibration method.

All tables in this section correspond to error rates ($\pm$ one standard deviation) averaged across 25 different random seeds. The combinations (P+L, Equation \eqref{eqn:calibrate_confuse_combo}) are fit using 5000 labeled data points on CIFAR-10H and using between 5067 and 5152 data points on ImageNet-16H (varies by noise level). The combinations are evaluated using 3000 data points on CIFAR-10H and using between 2171 and 2208 on ImageNet-16H.

\begin{table}[!h]
    \centering
    \resizebox{\columnwidth}{!}{%
    \begin{tabular}{lccccccc}
    \toprule
    \multicolumn{3}{c}{} &
    \multicolumn{5}{c}{Combination} \\
    \cmidrule(lr){4-8}
    Model Name & Human & Model & L+L & Uncalibrated & TS &  ETS & IMax \\
    \midrule 
    ResNet-110 &     $4.62 \pm 0.33$ &  $11.28 \pm 0.44$ &  $4.70 \pm 0.36$ &  $4.40 \pm 0.25$ &  $3.83 \pm 0.15$ &  $3.76 \pm 0.25$ &  $\bm{3.80 \pm 0.24}$ \\
    ResNet-164     & --- &   $6.10 \pm 0.38$ &  $4.71 \pm 0.37$ &  $3.05 \pm 0.23$ &  $\bm{2.78 \pm 0.15}$ &  $2.82 \pm 0.23$ &  $2.85 \pm 0.23$ \\
    PreResNet-164  & --- &   $5.00 \pm 0.36$ &  $4.36 \pm 0.39$ &  $2.90 \pm 0.22$ &  $\bm{2.43 \pm 0.22}$ &  $2.46 \pm 0.25$ &  $\bm{2.43 \pm 0.26}$ \\
    DenseNet-BC    &  --- &   $3.25 \pm 0.30$ &  $3.39 \pm 0.32$ &  $2.22 \pm 0.21$ &  $\bm{2.01 \pm 0.15}$ &  $2.17 \pm 0.17$ &  $2.04 \pm 0.18$ \\
    \bottomrule \\
    \end{tabular}
    }
    \caption{Error rates (\%, $\pm$ one standard deviation) averaged over 25 seeds on CIFAR-10H for various classifiers.}
    \label{table:cifar10_error_rates}
\end{table}

\begin{table}[!h]
    \centering
    \resizebox{\columnwidth}{!}{%
    \begin{tabular}{lccccccc}
    \toprule
    \multicolumn{2}{c}{} &
    \multicolumn{5}{c}{Combination} \\
    \cmidrule(lr){3-7}
    Human & Model & L+L & Uncalibrated & TS &  ETS & IMax \\
    \midrule 
    $9.99 \pm 0.48$ &  $11.10 \pm 0.60$ &   $6.78 \pm 0.42$ &   $7.52 \pm 0.52$ &    $\bm{6.03 \pm 0.54}$ &   $6.79 \pm 0.44$ &   $6.31 \pm 0.46$ \\
    $14.07 \pm 0.70$ &  $12.58 \pm 0.53$ &   $9.01 \pm 0.57$ &   $9.02 \pm 0.44$ &   $\bm{7.89 \pm 0.37}$ &   $9.32 \pm 0.49$ &   $8.62 \pm 0.47$ \\
     $22.99 \pm 0.71$ &  $15.51 \pm 0.62$ &  $14.07 \pm 0.82$ &  $12.59 \pm 0.53$ &  $\bm{11.62 \pm 0.54}$ &  $13.18 \pm 0.59$ &  $12.30 \pm 0.57$ \\
    $39.76 \pm 0.75$ &  $22.07 \pm 0.69$ &  $21.89 \pm 0.66$ &  $\bm{19.45 \pm 0.62}$ &  $19.63 \pm 0.70$ &  $20.74 \pm 0.62$ &  $20.47 \pm 0.63$ \\
    \bottomrule \\
    \end{tabular}
    }
    \caption{Error rates (\%, $\pm$ one standard deviation) averaged over 25 seeds, VGG-19 on ImageNet-16H. Each row corresponds to a different noise level (80, 95, 110, 125).}
    \label{table:imagenet_error_rates_vgg19}
\end{table}

\begin{table}[!h]
    \centering
    \resizebox{\columnwidth}{!}{%
    \begin{tabular}{lccccccc}
    \toprule
    \multicolumn{2}{c}{} &
    \multicolumn{5}{c}{Combination} \\
    \cmidrule(lr){3-7}
    Human & Model & L+L & Uncalibrated & TS &  ETS & IMax \\
    \midrule 
   $9.99 \pm 0.48$ &  $14.48 \pm 0.70$ &   $7.33 \pm 0.39$ &   $8.06 \pm 0.50$ &   $\bm{6.80 \pm 0.47}$ &   $7.73 \pm 0.41$ &   $7.66 \pm 0.44$ \\
    $14.07 \pm 0.70$ &  $17.22 \pm 0.72$ &   $9.93 \pm 0.73$ &  $10.66 \pm 0.52$ &  $\bm{9.67 \pm 0.40}$ &  $10.23 \pm 0.51$ &  $10.05 \pm 0.49$ \\
     $22.99 \pm 0.71$ &  $19.09 \pm 0.75$ &  $15.43 \pm 0.71$ &  $14.78 \pm 0.64$ &   $\bm{14.09 \pm 0.55}$ &  $14.76 \pm 0.53$ &  $14.53 \pm 0.53$ \\
    $39.76 \pm 0.75$ &  $27.06 \pm 0.47$ &  $25.64 \pm 0.43$ &  $23.06 \pm 0.72$ &  $\bm{22.60 \pm 0.63}$ &  $24.38 \pm 0.64$ &  $23.91 \pm 0.69$ \\
    \bottomrule \\
    \end{tabular}
    }
    \caption{Error rates (\%, $\pm$ one standard deviation) averaged over 25 seeds, GoogLeNet on ImageNet-16H. Each row corresponds to a different noise level (80, 95, 110, 125).}
    \label{table:imagenet_error_rates_googlenet}
\end{table}

\FloatBarrier
\newpage
\section{Calibration Properties of Combinations}
\label{sect:additional_calibration}

We further study the calibration properties of human-machine (P+L) combinations. The results in this Appendix are analogous to the results in Table \ref{table:cifar_calibration_mainpaper} for our ImageNet-16H models, where we show various calibration metrics as we vary the number of labeled datapoints used for fitting the combination. In general, we find that using only a small number of labeled datapoints (10 in our experiments) is sufficient, and we do not observe further improvements in calibration by using more labeled data (5000 points in our experiments) to fit the combination.

In addition, we investigate whether the resulting human-machine combination can be further calibrated. We calibrate the resulting human-machine combinations (with MAP TS) using the same data used to fit the combination, i.e. 5000 labeled datapoints (Recal. Comb.). We find that it is possible to further reduce the ECE of the combinations, but other metrics only see small improvements. However, we note that this does not affect the error rate of the combination, as MAP TS is accuracy-preserving.

\begin{table}[!ht]
    \centering
    \resizebox{\columnwidth}{!}{%
    \begin{tabular}{llccccccc}
        \toprule
        \multicolumn{2}{c}{} &
        \multicolumn{2}{c}{No Calibration} & 
        \multicolumn{2}{c}{10 Datapoints} &
        \multicolumn{3}{c}{5000 Datapoints}\\
        \cmidrule(lr){3-4} \cmidrule(lr){5-6} \cmidrule(lr){7-9} 
         Metric & Model Name & Model & Comb. & Model & Comb. & Model & Comb. & Recal. Comb. \\
        \midrule
        \multirow{4}{*}{ECE ($10^{-2}$)} 
        & ResNet-110 &  $5.23 \pm 0.35$ &  $2.08 \pm 0.25$ & $3.03 \pm 0.58$ & $1.30 \pm 0.23$ & $2.99 \pm 0.36$ & $1.76 \pm 0.18$ & $0.85 \pm 0.22$ \\
        & ResNet-164 &  $2.98 \pm 0.34$ &  $1.63 \pm 0.23$ & $1.95 \pm 0.33$ & $1.25 \pm 0.18$ & $1.89 \pm 0.32$ & $1.39 \pm 0.18$ & $0.84 \pm 0.20$ \\
        & PreResNet-164 &  $3.03 \pm 0.29$ &  $1.87 \pm 0.22$ & $2.31 \pm 0.33$ & $1.40 \pm 0.26$ & $2.27 \pm 0.31$ & $1.43 \pm 0.21$ & $1.06 \pm 0.21$ \\
        & DenseNet-BC &  $2.18 \pm 0.27$ &  $1.53 \pm 0.20$ & $1.76 \pm 0.28$ & $1.34 \pm 0.14$ & $1.73 \pm 0.28$ & $1.27 \pm 0.13$ & $0.95 \pm 0.18$ \\
        \midrule
        \multirow{4}{*}{cwECE ($10^{-2}$)}
        & ResNet-110 &  $0.81 \pm 0.07$ &  $0.23 \pm 0.05$ & $0.58 \pm 0.07$ & $0.24 \pm 0.05$ & $0.58 \pm 0.06$ & $0.19 \pm 0.06$ & $0.19 \pm 0.04$ \\
        & ResNet-164 &  $0.39 \pm 0.06$ &  $0.15 \pm 0.03$ &  $0.31 \pm 0.05$ & $0.15 \pm 0.04$ & $0.31 \pm 0.05$ & $0.13 \pm 0.03$ & $0.14 \pm 0.03$ \\
        & PreResNet-164 &  $0.29 \pm 0.04$ &  $0.13 \pm 0.03$ & $0.28 \pm 0.04$ & $0.13 \pm 0.03$ & $0.28 \pm 0.04$ & $0.13 \pm 0.03$ & $0.13 \pm 0.03$ \\
        & DenseNet-BC &  $0.23 \pm 0.03$ &  $0.11 \pm 0.02$ &  $0.24 \pm 0.02$ & $0.12 \pm 0.02$ & $0.24 \pm 0.02$ & $0.11 \pm 0.02$ & $0.10 \pm 0.02$ \\
        \midrule
        \multirow{4}{*}{NLL}
        &  ResNet-110 &  $0.40 \pm 0.02$ &  $0.16 \pm 0.01$ & $0.35 \pm 0.02$ & $0.15 \pm 0.01$ & $0.35 \pm 0.02$ & $0.14 \pm 0.01$ & $0.12 \pm 0.01$ \\
        & ResNet-164 &  $0.24 \pm 0.02$ &  $0.11 \pm 0.01$ & $0.20 \pm 0.01$ & $0.10 \pm 0.01$ & $0.20 \pm 0.01$ & $0.10 \pm 0.01$ & $0.09 \pm 0.01$ \\
        &  PreResNet-164 &  $0.23 \pm 0.02$ &  $0.13 \pm 0.02$ & $0.19 \pm 0.02$ & $0.11 \pm 0.01$ & $0.19 \pm 0.02$ & $0.10 \pm 0.01$&$0.08 \pm 0.01$ \\
        & DenseNet-BC &  $0.17 \pm 0.01$ &  $0.10 \pm 0.01$ & $0.14 \pm 0.01$ & $0.09 \pm 0.01$ & $0.14 \pm 0.01$ & $0.08 \pm 0.01$ &$0.07 \pm 0.01$ \\
        \bottomrule \\
    \end{tabular}
    } 
    \caption{Calibration metrics on CIFAR-10H.}
    \label{table:cifar_calibration_appendix}
\end{table}

\begin{table}[!ht]
    \centering
    \resizebox{\columnwidth}{!}{%
    \begin{tabular}{llccccccc}
        \toprule
        \multicolumn{2}{c}{} &
        \multicolumn{2}{c}{No Calibration} & 
        \multicolumn{2}{c}{10 Datapoints} &
        \multicolumn{3}{c}{5000 Datapoints}\\
        \cmidrule(lr){3-4} \cmidrule(lr){5-6} \cmidrule(lr){7-9} 
         Metric & Noise Level & Model & Comb. & Model & Comb. & Model & Comb. & Recal. Comb \\
        \midrule
        \multirow{4}{*}{ECE ($10^{-2}$)} 
        & 80 & $8.54 \pm 0.54$ & $5.17 \pm 0.49$ & $7.30 \pm 0.69$ & $3.91 \pm 0.53$ &$7.15 \pm 0.60$ & $4.01 \pm 0.42$ &$3.17 \pm 0.42$ \\
        & 95 & $8.96 \pm 0.48$ & $5.72 \pm 0.39$ & $7.49 \pm 0.77$ & $4.93 \pm 0.36$ &$7.26 \pm 0.51$ & $4.56 \pm 0.37$ &$3.23 \pm 0.35$ \\
        & 110 & $9.76 \pm 0.53$ & $7.81 \pm 0.48$ & $7.81 \pm 0.91$ & $6.31 \pm 0.67$ &$7.24 \pm 0.56$ & $6.07 \pm 0.53$ &$3.92 \pm 0.49$ \\
        & 125 & $11.81 \pm 0.64$ & $10.89 \pm 0.52$ & $7.34 \pm 1.45$ & $10.21 \pm 0.64$ &$7.29 \pm 0.56$ & $8.46 \pm 0.68$ &$4.49 \pm 0.60$ \\
        \midrule
        \multirow{4}{*}{cwECE ($10^{-2}$)}
        & 80 & $1.10 \pm 0.07$ & $0.68 \pm 0.05$ & $1.01 \pm 0.07$ & $0.59 \pm 0.06$ &$1.01 \pm 0.06$ & $0.54 \pm 0.05$ &$0.56 \pm 0.04$ \\
        & 95 & $1.18 \pm 0.06$ & $0.82 \pm 0.05$ & $1.13 \pm 0.06$ & $0.73 \pm 0.06$ &$1.12 \pm 0.06$ & $0.72 \pm 0.04$ &$0.69 \pm 0.04$ \\
        & 110 & $1.44 \pm 0.06$ & $1.14 \pm 0.06$ & $1.38 \pm 0.07$ & $1.04 \pm 0.08$ &$1.36 \pm 0.07$ & $1.03 \pm 0.07$ &$0.96 \pm 0.05$ \\
        & 125 & $1.98 \pm 0.06$ & $1.73 \pm 0.07$ & $1.86 \pm 0.05$ & $1.54 \pm 0.07$ &$1.85 \pm 0.04$ & $1.52 \pm 0.06$ &$1.45 \pm 0.06$ \\
        \midrule
        \multirow{4}{*}{NLL}
        & 80 & $0.71 \pm 0.05$ & $0.49 \pm 0.04$ & $0.53 \pm 0.05$ & $0.37 \pm 0.04$ &$0.52 \pm 0.03$ & $0.34 \pm 0.03$ &$0.27 \pm 0.02$ \\
        & 95 & $0.70 \pm 0.03$ & $0.52 \pm 0.03$ & $0.55 \pm 0.04$ & $0.41 \pm 0.03$ &$0.54 \pm 0.03$ & $0.39 \pm 0.02$ &$0.32 \pm 0.02$ \\
        & 110 & $0.73 \pm 0.03$ & $0.61 \pm 0.04$ & $0.60 \pm 0.04$ & $0.51 \pm 0.05$ &$0.57 \pm 0.03$ & $0.49 \pm 0.04$ &$0.41 \pm 0.02$ \\
        & 125 & $0.89 \pm 0.03$ & $0.83 \pm 0.04$ & $0.75 \pm 0.03$ & $0.77 \pm 0.03$ &$0.74 \pm 0.02$ & $0.71 \pm 0.03$ &$0.64 \pm 0.02$ \\
        \bottomrule \\
    \end{tabular}
    } 
    \caption{Calibration metrics for VGG-19 on ImageNet-16H.}
    \label{table:imagenet_vgg19_calibration_appendix}
\end{table}

\begin{table}[!ht]
    \centering
    \resizebox{\columnwidth}{!}{%
    \begin{tabular}{llccccccc}
        \toprule
        \multicolumn{2}{c}{} &
        \multicolumn{2}{c}{No Calibration} & 
        \multicolumn{2}{c}{10 Datapoints} &
        \multicolumn{3}{c}{5000 Datapoints}\\
        \cmidrule(lr){3-4} \cmidrule(lr){5-6} \cmidrule(lr){7-9} 
         Metric & Noise Level & Model & Comb. & Model & Comb. & Model & Comb. & Recal. Comb. \\
        \midrule
        \multirow{4}{*}{ECE ($10^{-2}$)} 
        & 80 & $7.39 \pm 0.58$ & $4.10 \pm 0.47$ & $4.60 \pm 0.62$ & $3.04 \pm 0.32$ &$4.52 \pm 0.58$ & $3.07 \pm 0.40$ &$1.97 \pm 0.39$ \\
        & 95 & $9.23 \pm 0.58$ & $5.68 \pm 0.40$ & $5.91 \pm 0.55$ & $4.19 \pm 0.54$ &$5.76 \pm 0.59$ & $4.32 \pm 0.42$ &$2.51 \pm 0.45$ \\
        & 110 & $9.04 \pm 0.68$ & $7.60 \pm 0.46$ & $5.34 \pm 1.15$ & $6.66 \pm 0.71$ &$5.34 \pm 0.37$ & $5.98 \pm 0.50$ &$3.00 \pm 0.43$ \\
        & 125 & $11.98 \pm 0.40$ & $10.95 \pm 0.62$ & $6.78 \pm 1.40$ & $11.33 \pm 0.36$ &$6.54 \pm 0.43$ & $7.98 \pm 0.60$ &$3.37 \pm 0.42$ \\
        \midrule
        \multirow{4}{*}{cwECE ($10^{-2}$)}
        & 80 & $1.33 \pm 0.07$ & $0.67 \pm 0.04$ & $1.30 \pm 0.07$ & $0.63 \pm 0.04$ &$1.30 \pm 0.07$ & $0.57 \pm 0.03$ &$0.57 \pm 0.03$ \\
        & 95 & $1.47 \pm 0.08$ & $0.87 \pm 0.05$ & $1.46 \pm 0.05$ & $0.78 \pm 0.06$ &$1.47 \pm 0.05$ & $0.75 \pm 0.04$ &$0.74 \pm 0.05$ \\
        & 110 & $1.70 \pm 0.08$ & $1.23 \pm 0.06$ & $1.66 \pm 0.03$ & $1.12 \pm 0.07$ &$1.65 \pm 0.04$ & $1.10 \pm 0.06$ &$1.03 \pm 0.06$ \\
        & 125 & $2.31 \pm 0.06$ & $1.92 \pm 0.06$ & $2.19 \pm 0.06$ & $1.70 \pm 0.06$ &$2.19 \pm 0.06$ & $1.68 \pm 0.05$ &$1.59 \pm 0.06$ \\
        \midrule
        \multirow{4}{*}{NLL}
        & 80 & $0.59 \pm 0.03$ & $0.34 \pm 0.02$ & $0.50 \pm 0.02$ & $0.29 \pm 0.02$ &$0.50 \pm 0.03$ & $0.28 \pm 0.02$ &$0.25 \pm 0.02$ \\
        & 95 & $0.65 \pm 0.03$ & $0.43 \pm 0.03$ & $0.56 \pm 0.01$ & $0.37 \pm 0.02$ &$0.56 \pm 0.01$ & $0.36 \pm 0.02$ &$0.33 \pm 0.02$ \\
        & 110 & $0.75 \pm 0.03$ & $0.60 \pm 0.03$ & $0.66 \pm 0.03$ & $0.56 \pm 0.03$ &$0.66 \pm 0.02$ & $0.53 \pm 0.02$ &$0.47 \pm 0.02$ \\
        & 125 & $0.97 \pm 0.02$ & $0.88 \pm 0.03$ & $0.85 \pm 0.02$ & $0.89 \pm 0.02$ &$0.85 \pm 0.01$ & $0.79 \pm 0.02$ &$0.74 \pm 0.02$ \\
        \bottomrule \\
    \end{tabular}
    } 
    \caption{Calibration metrics for GoogLeNet on ImageNet-16H.}
    \label{table:imagenet_googlenet_calibration_appendix}
\end{table}

\FloatBarrier
\newpage
\section{Dataset, Model Training, and Code Details} \label{sec:model_details}

\subsection{CIFAR-10H}
The CIFAR-10H dataset \cite{peterson2019human} consists of the $10,000$ images in the standard CIFAR-10 test set, but each image is labeled by approximately $50$ individual human labelers. There are ten classes in this dataset.

We study four CNN model architectures on CIFAR-10H:
\begin{itemize}
    \item ResNet-110 and ResNet-164 \cite{he2016deep}: Deep residual networks with 110 and 164 layers respectively.
    \item PreResNet-164 \cite{he2016identity}: A deep residual network with identity mappings as skip connections, with 164 layers.
    \item DenseNet-BC \cite{huang2017densely}: A densely connected CNN with $L=190$ layers and a growth-rate of $k=40$, using bottleneck layers.
\end{itemize}

For each model, we use pre-trained weights available at \url{https://github.com/bearpaw/pytorch-classification} (MIT License). These models were trained on the standard CIFAR-10 training split.

\subsection{ImageNet-16H}

The ImageNet-16H dataset \cite{unpublishedwork} consists of noisy images from the ImageNet test set \cite{deng2009imagenet}, distorted by phase noise at each spatial frequency based on four levels of phase noise (80, 95, 110, and 125). Approximately $7200$ images were classified at each noise level (with slight variability per noise level). The number of classes is reduced to $16$ (as compared to $1000$ in the original ImageNet dataset).

We study two model architectures on ImageNet-16H: VGG-19 \cite{simonyan2014very} and GoogLeNet \cite{szegedy2015going}. Our training procedure is detailed as follows. We first load a pre-trained ImageNet model (trained on the original 1000 class ImageNet dataset) from the PyTorch model library \citeappendix{NEURIPS2019_9015}. We remove the final linear layer and replace it with a randomly initialized linear layer with a $16$-dimensional output. We then fine-tune all model weights (using the cross-entropy loss) on noisy images from the ImageNet-16H training set (261,168 images). The models are fine-tuned to all levels of noise simultaneously by randomly assigning a different degree of phase noise (ranging from 0 to 130 degrees) to each training image in a batch.

\subsection{Additional Code Details}

Our experiments are implemented in Python 3.8, and make use of the following libraries:
\begin{itemize}
    \item Scikit-Learn \citeappendix{scikit-learn} (BSD License)
    \item PyTorch \citeappendix{NEURIPS2019_9015} (BSD License)
    \item Pyro \citeappendix{bingham2019pyro} (Apache 2.0 License)
    \item NumPy \citeappendix{harris2020array} (BSD License)
    \item IMax Calibration \cite{patel2020multi} (AGPL-3.0 License)
    \item Ensemble Temperature Scaling \cite{zhang2020mix} (MIT License)
\end{itemize}

\subsection{Compute Resources}
All of our experiments were conducted on a standard desktop computer (AMD Ryzen 5 6-core @ 3.6GHz, 16GB memory).

Other than the fully Bayesian combination, all combination methods studied in this work do not require significant computational resources and can be fit on the order of seconds. The fully Bayesian method (Appendix \ref{sect:fully_bayesian_ts}) is more computationally intensive as it requires the use of MCMC to sample from the posterior distribution over calibration parameters, but can still be fit in approx. 2 minutes with 5000 labeled datapoints. However, we focus on MAP estimation in our main results (which does not require MCMC), and only compare to the fully Bayesian setup as a baseline comparison. In addition, we find the fully Bayesian setup to be less label efficient than the MAP counterpart (see Appendix \ref{sect:addl_learning_curves}).

In terms of model training, our ImageNet-16H models were trained on an internal GPU server with  8x GTX 2080ti GPUs and 2 x Intel Xeon Gold 5218 (16 core) processors. On our hardware, fine-tuning for $50$ epochs requires approximately $6$ hours of training per model. 

\newpage
\section{Conditional Independence Combination as a Special Case of Logistic Regression}
\label{sect:lr_and_ci}

We demonstrate that the conditional independence combination (Equation \eqref{eqn:calibrate_confuse_combo}) can be seen as a special case of logistic regression taking $m(x)$ and $h(x)$ as inputs, but only when the calibration map takes a particular functional form. Calibration maps such as temperature scaling and Dirichlet  calibration \citeappendix{kull2019beyond} satisfy this requirement.

\subsection{Logistic Regression}
In the logistic regression (LR) model, for input $x$ we have features $z \in \R^{2k}$, $z(x) = m(x) \oplus H(x)$, where $H(x)$ is the one-hot version of $h(x)$ and $\oplus$ is the direct sum. A weight matrix $W \in \R^{k \times 2k}$ and a bias $b \in \R^k$ are to be learned. The probabilistic output is given by an element-wise softmax:
\begin{equation}
    x \mapsto \text{SoftMax}(Wz(x) + b) \in \R^k
\end{equation}

We can write $W = [W_m | W_h]$ as a block matrix, where $W_m, W_h \in \R^{k \times k}$ are the model and human weights respectively. In log-space, the LR model is then
\begin{equation}
    \log p(y | m(x), h(x)) =  W_m m(x) + W_h H(x) + b - \log(C)
\end{equation}
where $C$ is a normalizing constant. Since $H(x)$ is one-hot, the term $W_h H(x)$ corresponds to a column in $W_h$, e.g. if $H(x) = [1, 0, \dots, 0]^{\sf T}$, then $W_h H(x)$ is the first column of $W_h$. The above is the full vector of probabilities. To make it clearer, for an index $i$, let $W_m^i$ be the $i$th row of $W_m$ (resp. for $W_h$).

\begin{align}
    p(y = i | m(x), h(x) ) &= W_m^i m(x) + W_h^i H(x) + b_i - \log(C) \\
    &= W_m^i m(x) + (W_h)_{i h(x)} + b_i - \log(C)
\end{align}

\subsection{CI Model}
In the CI model,
\begin{equation}
    p(y | m(x), h(x)) \propto p(y | m(x)) p(h(x) | y) 
\end{equation}
In log-space for a single index $i$:
\begin{align}
    \log p(y=i | m(x), h(x)) &=  \log p(y | m(x)) + \log p(h(x) | y) - \log (C) \\
    &= \log m_i^\theta (x) + \log \varphi_{h(x)y} - \log (C)
\end{align}

\subsection{}
From this, we see that $W_h$ is analogous to the log-confusion matrix of $h$. Similarly, $W_m$ can be thought of as a linear operator mapping the model probabilities to log-calibrated model probabilities.

If we use $\log m(x)$ (pointwise) for the input feature $z(x)$, the LR model is

\begin{align}
    p(y = i | m(x), h(x) ) &= W_m^i \log m(x) + (W_h)_{i h(x)} + b_i - \log(C)
\end{align}

In the special case $W_m = \frac{1}{T} I$, $b_i = 0$, and $W_h = \log \varphi^{\sf T}$, we recover temperature scaling CI. In fact, the equation $W_m \log m(x) + b$ is the same as Dirichlet calibration -- vector scaling / matrix scaling are special cases as well.

\newpage
\section{Derivation of Fully Bayesian Model for TS}
\label{sect:fully_bayesian_ts}

In this section, we derive a fully Bayesian method for combining classifier probabilities with human labels. In summary, we place a Gaussian prior on the log-temperature (for calibration) and independent Dirichlet priors over the columns of the human confusion matrix. The posterior human confusion matrix is available in closed-form (due to conjugacy), and we sample from the posterior distribution over calibration parameters using MCMC. To predict on a new datapoint, we marginalize over the calibration and confusion parameters using the sampled temperatures and closed-form posterior confusion parameters. This marginalization is only approximate due to the required sampling step. 

In more detail, let $\varphi_{*i} \sim \text{Dirichlet}(\alpha_i)$ for $i = 1, 2, \dots, k$ be priors over the columns of the confusion matrix, and let $\log T = \tau \sim \mathcal{N}(\mu_0, \sigma_0^2)$ be a prior over the log-temperature. We use $\varphi$ to denote the confusion matrix with columns $\varphi_{*1} , \dots, \varphi_{*K}$. We assume a fully labeled dataset is available, and of the form $\DD = \{ (h_\ell, m_\ell, y_\ell)\}$. Take the calibration and confusion parameters to be conditionally independent given the data:
\begin{equation}
    p(\tau, \varphi | \DD) = p(\tau | \varphi, \DD) p(\varphi | \DD) = p(\tau | \DD) p(\varphi | \DD)
\end{equation}

The confusion parameters have a conjugate prior, but the calibration parameters do not -- hence, suppose that we have sampled $\{\tau_1, \dots, \tau_{n_s} \}$ from the posterior $p(\tau | \DD)$. To do inference on a new datapoint $(h, m)$, we marginalize over $\varphi$ and $\tau$ for a particular choice of $y$:
\begin{align}
    p(y | h, m, \DD) &= \iint p(y, \tau, \varphi | h, m, \DD) d\varphi d\tau \\
    &= \iint p(y | \tau, \varphi, h, m, \DD) p(\tau | \DD) p(\varphi | \DD) d\varphi d\tau \\
    \intertext{The second line is obtained by conditioning on $\tau, \varphi$ and using the fact that $\tau$ and $\varphi$ are independent given $\DD$. We now use Equation \eqref{eqn:combo_eqn} to re-write the first term, obtaining (up to a constant):
    }
    &\propto \iint  p(h | y, \varphi) p(y | m, \tau) p(\tau | \DD) p(\varphi| \DD)  d\varphi d\tau \\
    \intertext{We now split the integral into its independent components, and use our parametric assumptions to replace $p(h|y, \varphi)$ with $\varphi_{hy}$ and $p(y | m, \tau)$ with $m_y^{(\tau)}$:}
    &= \left[ \int m_y^{(\tau)} p(\tau | \DD) d\tau \right] \left[\int \varphi_{hy} p(\varphi | \DD) d\varphi \right] \\
    \intertext{The second integral is the posterior mean of $\varphi_{hy}$, which is available in closed-form by conjugacy. However, as we do not have a closed-form posterior for $p(\tau | \DD)$, we estimate the first integral using our samples. In all, we obtain}
    &\approx \left[ \frac{1}{n_s} \sum_{j=1}^{n_s} m_y^{(\tau_j)} \right] \cdot \frac{\alpha_{hy}'}{\sum_{\ell=1}^K \alpha_{\ell j}'}
\end{align}

where $\alpha'_{ij}$ is the posterior Dirichlet parameter for entry $(i,j)$ in the confusion matrix $\varphi$. Note that the resulting probabilities will be un-normalized, but normalization is straightforward as we are considering a set of discrete outcomes.

In practice, we use HMC \citeappendix{neal2011mcmc} in the Pyro probabilistic programming language \citeappendix{bingham2019pyro} to sample from the posterior over log-temperatures.

\newpage
\FloatBarrier
\section{Learning Curves}
\label{sect:addl_learning_curves}

In addition to those in Figure \ref{fig:learning_curves}, we provide learning curves that include additional baseline models: logistic regression (LR), the single-parameter confusion matrix method (SP), and the fully Bayesian P+L method (P+L Fully Bayesian). We report only the mean error rate averaged over $10$ random seeds for the sake of visual clarity. All methods (other than LR) are fit using MAP inference. We do not present the maximum likelihood (ML) variants for these methods, as the MAP methods outperform their ML counterparts in our experiments.

While the SP method is label efficient given its low parameter count, it often underfits to the data and converges to an error rate worse than the P+L method. In some cases, the fully Bayesian obtains a lower error rate than the P+L method, but requires more labeled data to be fit, as well as being more computationally intensive. On the CIFAR-10 data, the logistic regression method is label inefficient, and while it outperforms the L+L method, converges to a worse error rate than the P+L method. In contrast, LR is able to outperform the P+L method on the ImageNet-16H datasets, but only when fit several hundred datapoints.

\begin{figure}[hb]
    \centering
    \includegraphics[scale=0.75]{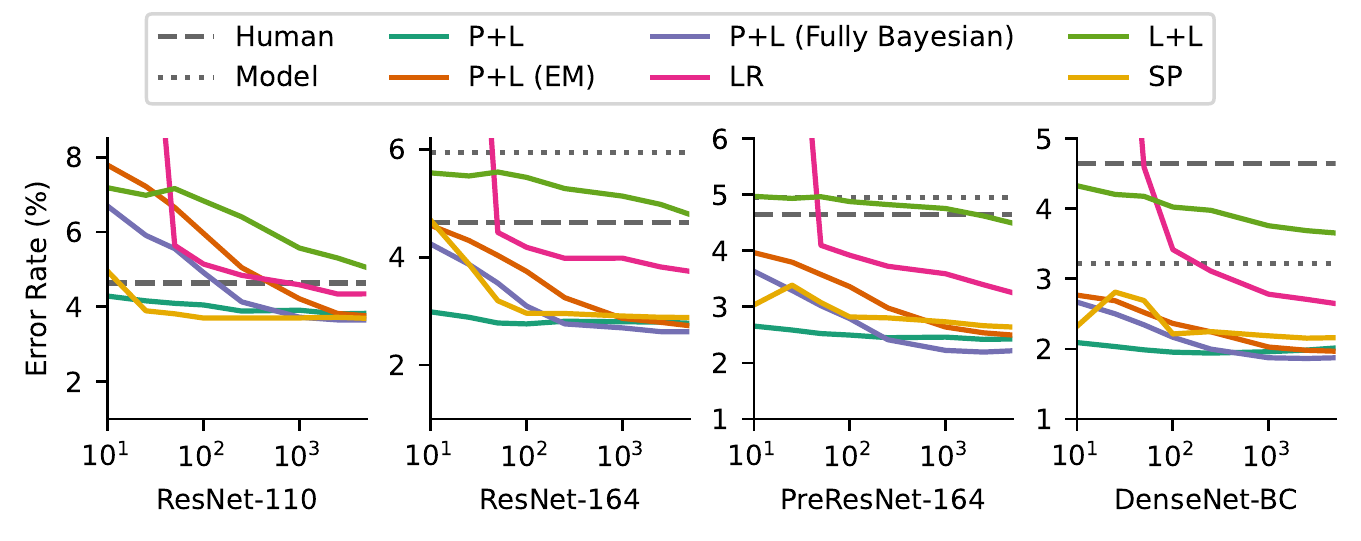}
    \caption{Learning curves for various models on CIFAR-10H.}
    \label{fig:cifar10_learning_curves_appendix}
\end{figure}

\begin{figure}[hb]
    \centering
    \includegraphics[scale=0.75]{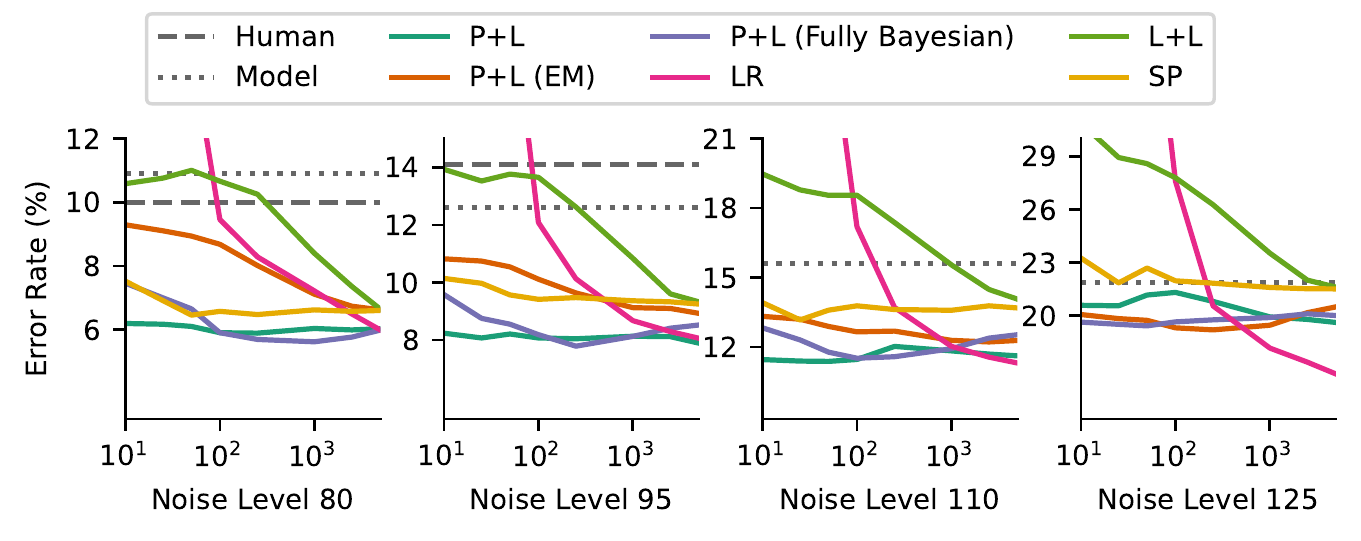}
    \caption{Learning curves for VGG-19 on ImageNet-16H at various noise levels.}
    \label{fig:vgg19_learning_curves_appendix}
\end{figure}

\begin{figure}[hb]
    \centering
    \includegraphics[scale=0.75]{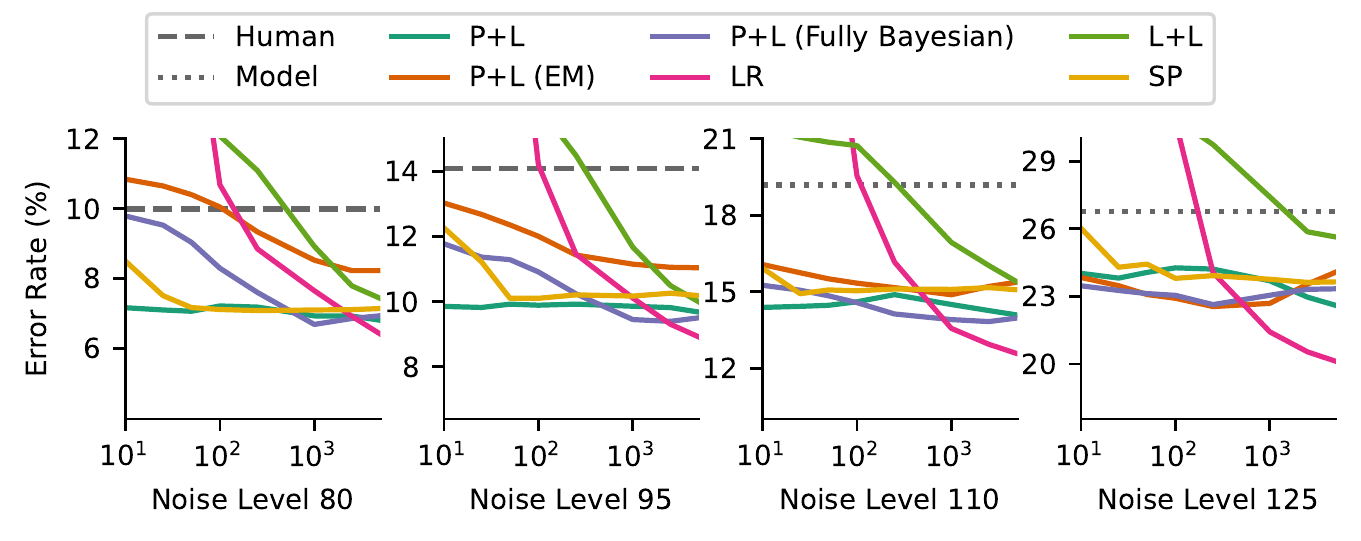}
    \caption{Learning curves for GoogLeNet on ImageNet-16H at various noise levels.}
    \label{fig:googlenet_learning_curves_appendix}
\end{figure}

\FloatBarrier
\newpage
\section{EM Algorithm Details}
\label{sect:em_details}
In this section, we provide a detailed derivation and description of our EM algorithm.

Let $\mathcal{D}_C = \{ (m(x_\ell), h(x_\ell) )  \}_{\ell=1}^n$ be an unlabeled dataset used for fitting combination parameters, consisting of classifier probabilities and human labels but no ground truth labels. Our goal is to infer classifier calibration parameters $\theta$ and the human confusion matrix $\varphi$ from $\DD_C$. We use $m_\ell$ as a shorthand for $m(x_\ell)$ throughout (respectively for $h$).

We can fit this model via EM, where the ground truth is treated as latent. For simplicity, we derive the maximum likelihood variant, and discuss the necessary changes for the MAP variant at the end of this section. In the E-step, $p(y | m_\ell, h_\ell, \varphi, \theta)$ is estimated from Equation \eqref{eqn:calibrate_confuse_combo}. 

For the M-step, we maximize the expected log-likelihood, where we use $\Theta = \{\theta, \varphi \}$ to denote the set of all parameters:
\begin{align*}
    \Theta_{t+1} &= \arg \max_\Theta \sum_i \Exp[y \sim p(y | h, m, \theta_t)]{\log p(y, h_\ell, m_\ell | \Theta)} \\
    &= \arg \max_\Theta \sum_\ell \sum_y p(y | h_\ell, m_\ell, \Theta_t) \log p(y, h_\ell, m_\ell | \Theta) \\
    &= \arg \max_\Theta \bigg[ \sum_\ell \sum_y p(y | h_\ell, m_\ell, \Theta_t) \log p(h_\ell | y, \Theta) \\
    &+ \sum_\ell \sum_y p(y | h_\ell, m_\ell, \Theta_t)  \log  p(y | m_\ell, \Theta) +  C  \bigg]
\end{align*}

where $C$ is a constant not depending on $\Theta$. Assuming further that the calibration and confusion parameters are independent, the M-step becomes two independent optimizations (i.e. one for $\theta$ and one for $\varphi$):
\begin{equation} \label{eqn:m_step_phi}
    \theta_{t+1} = \arg\max_\theta \sum_\ell \sum_y p(y | h_\ell, m_\ell, \Theta_t) \log p(y | m_\ell, \theta)
\end{equation}
\begin{equation} \label{eqn:m_step_psi}
    \varphi_{t+1} = \arg\max_\varphi \sum_\ell \sum_y p(y | h_\ell, m_\ell, \Theta_t )\log p(h_\ell | y, \varphi)
\end{equation}

In Equation \eqref{eqn:m_step_phi}, $\log p(y | m_\ell, \theta)$ depends on the calibration method we choose, and the update for $\theta_{t+1}$ does not have a closed-form update. We use gradient methods to maximize this term.

Equation \eqref{eqn:m_step_psi} is maximum likelihood for the confusion matrix and hence $\varphi_{t+1}$ can be solved for in closed-form. In particular, the value for $\varphi_{t+1}$ at entry $i,j$ is

\begin{equation}
    \varphi_{i,j} = \frac{\sum_{\ell : h_\ell = a} p(y = j | h_\ell, m_\ell, \Theta_t)}{\sum_{\ell} p(y = j | h_\ell, m_\ell, \Theta_t) }
\end{equation}

For the MAP variant of our EM algorithm, our optimizations become

\begin{equation} \label{eqn:m_step_theta_MAP}
    \theta_{t+1} = \arg\max_\theta \sum_\ell \sum_y p(y | h_\ell, m_\ell, \Theta_t) \log p(y | m_\ell, \theta) + \log p(\theta)
\end{equation}
\begin{equation} \label{eqn:m_step_phi_MAP}
    \varphi_{t+1} = \arg\max_\varphi \sum_\ell \sum_y p(y | h_\ell, m_\ell, \Theta_t )\log p(h_\ell | y, \varphi) + \log p(\varphi)
\end{equation}

The first optimization (Equation \eqref{eqn:m_step_theta_MAP}) is still fit using gradient methods. As we choose independent Dirichlet priors for each column of $\varphi$, the closed-form estimate for $\varphi$ becomes

\begin{equation}
    \varphi_{i,j} = \frac{\alpha_{ji} - 1 + \sum_{\ell : h_\ell = a} p(y = j | h_\ell, m_\ell, \Theta_t)}{\gamma + (K-1)\beta - K + \sum_{\ell} p(y = j | h_\ell, m_\ell, \Theta_t) }
\end{equation}

which is analogous to the typical Dirichlet-multinomial posterior.

\comment{
\FloatBarrier
\newpage
\section{Fitting Confusion Matrices to Individuals}
\label{sect:individual_conf_matrix}

\begin{table}[!ht]
    \centering
    \begin{tabular}{lccc}
        \toprule
         Model Name & Human & Model & P+L Comb. \\
         \midrule 
         ResNet-110 & $5.11 \pm 5.7$ & $11.1 \pm 2.5$ & $4.06 \pm 1.76$ \\
         ResNet-164 & $5.11 \pm 5.7$ & $6.09 \pm 1.7$5 & $2.87 \pm 1.4$ \\
         PreResNet-164 & $5.11 \pm 5.7$ & $4.92 \pm 1.6$ & $2.62 \pm 1.3$ \\
         DenseNet-BC & $5.11 \pm 5.7$ & $3.31 \pm 1.4$ & $2.04 \pm 1.1$ \\
         \bottomrule \\
    \end{tabular}
    
    \caption{Error rates on CIFAR-10H, $\pm$ one standard deviation. We fit a confusion matrix to each individual labeler using 25 datapoints, and evaluated with the remaining data (175 points per individual). The combination is the MAP P+L method, using the priors detailed in the main body of the paper.  }
    \label{table:cifar_individual_confusion}
\end{table}

\begin{table}[!ht]
    \centering
    \begin{tabular}{lccc}
        \toprule
         Noise Level & Human & Model & P+L Comb. \\
         \midrule 
         80 & $9.81 \pm 8.5$ & $11.29 \pm 6.3$ & $6.15 \pm 4.9$ \\
         95 & $13.87 \pm 9.7$ & $12.56 \pm 6.3$ & $8.11 \pm 5.5$ \\
         110 & $23.13 \pm 12.3$ & $15.49 \pm 7.6$ & $11.59 \pm 6.7$ \\
         125 & $39.84 \pm 13.4$ & $21.92 \pm 8.2$ & $20.87 \pm 8.3$ \\
         \bottomrule \\
    \end{tabular}
    
    \caption{Error rates with VGG19 on ImageNet-16H, $\pm$ one standard deviation. We fit a confusion matrix to each individual labeler using 25 datapoints, and evaluated with the remaining data (25 points per individual). The combination is the MAP P+L method, using the priors detailed in the main body of the paper.  }
    \label{table:imagenet_vgg19_individual_confusion}
\end{table}

\begin{table}[!ht]
    \centering
    \begin{tabular}{lccc}
        \toprule
         Noise Level & Human & Model & P+L Comb. \\
         \midrule 
         80 & $9.81 \pm 8.5$ & $14.49 \pm 7.0$ & $7.11 \pm 5.6$ \\
        95 & $13.87 \pm 9.7$ & $17.16 \pm 7.0$ & $9.72 \pm 6.4$ \\ 
        110 & $23.13 \pm 12.3$ & $18.88 \pm 8.2$ & $14.25 \pm 7.7$ \\ 
        125 & $39.84 \pm 13.4$ & $26.63 \pm 9.2$ & $24.30 \pm 9.3$ \\
         \bottomrule \\
    \end{tabular}
    
    \caption{Error rates with DenseNet-BC on ImageNet-16H, $\pm$ one standard deviation. We fit a confusion matrix to each individual labeler using 25 datapoints, and evaluated with the remaining data (25 points per individual). The combination is the MAP P+L method, using the priors detailed in the main body of the paper.  }
    \label{table:imagenet_vgg19_individual_confusion}
\end{table}
}

\bibliographystyleappendix{unsrt}
\bibliographyappendix{appendix_refs}
\end{appendices}

\end{document}